\newcommand{\st}{\text{s.t. }}
\newcommand{\supp}{\text{supp}}
\def\eqref#1{equation~\ref{#1}}
\def\1{\bm{1}}
\def\vx{{\bm{x}}}
\def\evbeta{{\beta}}
\def\evk{{k}}
\DeclareMathAlphabet{\mathsfit}{\encodingdefault}{\sfdefault}{m}{sl}
\SetMathAlphabet{\mathsfit}{bold}{\encodingdefault}{\sfdefault}{bx}{n}
\def\gD{{\mathcal{D}}}
\def\gL{{\mathcal{L}}}
\def\gM{{\mathcal{M}}}
\def\gS{{\mathcal{S}}}
\def\gT{{\mathcal{T}}}
\def\gX{{\mathcal{X}}}
\def\gY{{\mathcal{Y}}}
\begin{document}
\pagestyle{headings}
\mainmatter
\def\ECCVSubNumber{3318}  

\title{Meta-Learning with Network Pruning} 

\titlerunning{Meta-Learning with Network Pruning}
\author{Hongduan Tian\inst{1} \and
Bo Liu\inst{2} \and
Xiao-Tong Yuan\inst{1} \and
Qingshan Liu\inst{1}}

\authorrunning{Tian et al.}
\institute{B-DAT Lab, Nanjing University of Information Science and Technology, Nanjing, 210044, China \and
  JD Finance America Corporation, Mountain View, CA 94043, USA\\
\email{\{hongduan.tian,kfliubo,xtyuan1980\}@gmail.com},
\email{qsliu@nuist.edu.cn}}
\maketitle

\begin{abstract}
\emph{Meta-learning} is a powerful paradigm for few-shot learning. Although with remarkable success witnessed in many applications, the existing optimization based meta-learning models with over-parameterized neural networks have been evidenced to ovetfit on training tasks. To remedy this deficiency, we propose a network pruning based meta-learning approach for overfitting reduction via explicitly controlling the capacity of network. A uniform concentration analysis reveals the benefit of network capacity constraint for reducing generalization gap of the proposed meta-learner. We have implemented our approach on top of Reptile assembled with two network pruning routines: Dense-Sparse-Dense (DSD) and Iterative Hard Thresholding (IHT). Extensive experimental results on benchmark datasets with different over-parameterized deep networks demonstrate that our method not only effectively alleviates meta-overfitting but also in many cases improves the overall generalization performance when applied to few-shot classification tasks.

\keywords{Meta-Learning; Few-shot Learning; Network Pruning; Sparsity; Generalization Analysis.}
\end{abstract}

\section{Introduction}

The ability of adapting to a new task with several trials is essential for artificial agents. The goal of few-shot learning~\cite{santoro2016meta} is to build a model which is able to get the knack of a new task with limited training samples. Meta-learning~\cite{schmidhuber1987evolutionary,bengio1990learning,thrun2012learning} provides a principled way to cast few-shot learning as the problem of \emph{learning-to-learn}, which typically trains a hypothesis or learning algorithm to memorize the experience from previous tasks for a future task learning with very few samples. The practical importance of meta-learning has been witnessed in many vision and online/reinforcement learning applications including image classification~\cite{Ravi2016OptimizationAA,li2017meta}, multi-arm bandit~\cite{sung2017learning} and 2D navigation~\cite{finn2017model}.

Among others, one particularly simple yet successful meta-learning paradigm is first-order optimization based meta-learning which aims to train hypotheses that can quickly adapt to unseen tasks by performing one or a few steps of (stochastic) gradient descent~\cite{Ravi2016OptimizationAA,finn2017model}. Reasons for the recent increasing attention to this class of gradient-optimization based methods include their outstanding efficiency and scalability exhibited in practice~\cite{nichol2018first}.

\textbf{Challenge and motivation.} A challenge in the existing meta-learning approaches is their tendency to overfit~\cite{mishra2018a,yoon2018bayesian}. When training an over-parameterized meta-learner such as very deep and/or wide convolutional neural networks (CNNs) which are powerful for representation learning, there are two sources of potential overfitting at play: the inter-task overfitting of meta-learner (or\emph{ meta-overfitting}) to the training tasks and the inner-task overfitting of task-specific learner to the task training data. There have been recent efforts put to deal with inner-task overfitting~\cite{lee2019meta,zintgraf2019fast}. The study on the inter-task meta-overfitting, however, still remains under explored. Since in principle the optimization-based meta-learning is designed to learn fast from small amount of data in new tasks, we expect the meta-overfitting to play a more important role in influencing the overall generalization performance of the trained meta-learner.

Sparsity model is a promising tool for high-dimensional machine learning with guaranteed statistical efficiency and robustness to overfitting~\cite{maurer2012structured,yuan2018gradient,abramovich2018high}. It has been theoretically and numerically justified by~\cite{arora2018stronger} that sparsity benefits considerably the generalization performance of deep neural networks. In the regime of compact deep learning, the so called \emph{network pruning} technique has been widely studied and evidenced to work favorably in generating sparse subnetworks without compromising generalization performance~\cite{Frankle2019TheLT,jin2016training,han2016dsd}. Inspired by these remarkable success of sparsity models, it is natural to conjecture that sparsity would also be beneficial for enhancing the robustness of optimization based meta-learning to meta-overfitting.

\textbf{Our contribution.} In this paper, we present a novel gradient-based meta-learning approach with explicit network capacity constraint for overfitting reduction. The problem is formulated as learning a sparse meta-initialization network from training tasks such that in a new task the learned subnetwork can quickly converge to the optimal solution via gradient descent. The core idea is to reduce meta-overfitting by controlling the counts of the non-zero parameters in the meta-learner during the training phase. Theoretically, we have established a uniform generalization gap bound for the proposed sparse meta-learner showing the benefit of capacity constraint for improving its generalization performance. Practically, we have implemented our approach in a joint algorithmic framework of Reptile~\cite{nichol2018first} with network pruning, along with two instantiations using Dense-Sparse-Dense (DSD)~\cite{han2016dsd} and Iterative Hard Thresholding (IHT)~\cite{jin2016training} as network pruning subroutines, respectively. The actual performance of our approach has been extensively evaluated on few-shot classification tasks with over-parameterized wide CNNs. The obtained results demonstrate that our method can effectively alleviate overfitting and achieve similar or even superior generalization performance to the conventional dense models.

\section{Related Work}\label{related_work}
\label{gen_inst}

\textbf{Optimization-based meta-learning.}\quad The family of optimization-based meta-learning approaches usually learn a good hypothesis which can be fast adapted to unseen tasks~\cite{Ravi2016OptimizationAA,finn2017model,nichol2018first,khodak2019provable}. Compared to the metric~\cite{koch2015siamese,snell2017prototypical} and memory~\cite{weston2014memory,santoro2016meta} based meta-learning algorithms, optimization based meta-learning algorithms are gaining increasing attention due to their simplicity, versatility and effectiveness. As a recent leading framework for optimization-based meta-learning, MAML~\cite{finn2017model} is designed to estimate a meta-initialization network which can be well fine-tuned in an unseen task via only one or few steps of minibatch gradient descent. Although simple in principle, MAML requires computing Hessian-vector product for back-propagation, which could be computationally expensive when the model is big. The first-order MAML (FOMAML) is therefore proposed to improve  the computational efficiency by simply ignoring the second-order derivatives in MAML. Reptile~\cite{nichol2018first} is another approximated first-order algorithm which works favorably since it maximizes the inner product between gradients from the same task yet different minibatches, leading to improved model generalization. Recently, several hypothesis biased regularized meta-learning approaches have been studied in~\cite{denevi2019learning,khodak2019provable,zhou2019efficient} with provable strong generalization performance guarantees provided for convex problems. In~\cite{lee2019meta}, the meta-learner is treated as a feature embedding module of which the output is used as input to train a multi-class kernel support vector machine as base learner. To deal with overfitting, the CAVIA method~\cite{zintgraf2019fast} decomposes the meta-parameters into the so called context parameters and shared parameters. The context parameters are updated for task adaption with limited capacity while the shared parameters are meta-trained for generalization across tasks.

\textbf{Network pruning.}
Early network weight pruning algorithms date back to Optimal Brain Damage~\cite{lecun1990optimal} and Optimal Brain Surgeon~\cite{hassibi1993optimal}. A dense-to-sparse algorithm was developed by~\cite{han2015learning} to firstly remove near-zero weights and then fine tune the preserved weights. As a serial work of dense-to-sparse, the dense-sparse-dense (DSD) method~\cite{han2016dsd} was proposed to re-initialize the pruned parameters as zero and retrain the entire network after the dense-to-sparse pruning phase. The iterative hard thresholding (IHT) method~\cite{jin2016training} shares a similar spirit with DSD to conduct multiple rounds of iteration between pruning and retraining. ~\cite{srinivas2015data} proposed a data-free method to prune the neurons in a trained network. In~\cite{louizos2017learning}, an $L_{0}$-norm regularized risk minimization framework was proposed to learn sparse networks during training. More recently, ~\cite{Frankle2019TheLT} introduced and studied the ``lottery ticket hypothesis" which assumes that once a network is initialized, there should exist an optimal subnetwork, which can be learned by pruning, that performs as well as the original network or even superior.

Despite the remarkable success achieved by both meta-learning and network pruning, it still remains largely open to investigate the impact of network pruning on alleviating the meta-overfitting of optimization based meta-learning, which is of primal interest to our study in this paper.

\section{Method}
\label{headings}



\subsection{Problem Setup}

We consider the $\displaystyle N$-way $\displaystyle K$-shot problem as defined in~\cite{vinyals2016matching}. Tasks are sampled from a specific distribution $\displaystyle p(\gT)$ and will be divided into \emph{meta training set} $\displaystyle \gS^{tr}$, \emph{meta validation set} $\displaystyle \gS^{val}$, and \emph{meta testing set} $\displaystyle \gS^{test}$. Classes in different datasets are disjoint (i.e., the class in $\displaystyle \gS^{tr}$ will not appear in $\displaystyle \gS^{test}$). During training, each task is made up of support set $\displaystyle \gD^{supp}$ and query set $\displaystyle \gD^{query}$. Both $\displaystyle \gD^{supp}$ and $\displaystyle \gD^{query}$ are sampled from the same classes of $\displaystyle \gS^{tr}$. $\displaystyle \gD^{supp}$ is used for training while $\displaystyle \gD^{query}$ is used for evaluation. For a $\displaystyle N$-way $\displaystyle K$-shot classification task, we sample $\displaystyle N$  out of the $C$ classes from dataset, and then $\displaystyle K$ samples are sampled from each of these classes to form $\displaystyle \gD^{supp}$, namely $\displaystyle \gD^{supp}=\displaystyle \{(\vx_c^k, y_c^k), k=1,2,...,K; c=1,2,...,N\}$. For example, for a 5-way 2-shot task, we sample 2 data-label pairs from each of 5 classes, thus, such a task has 10 samples. Usually, several other samples of the same classes will be sampled to compose $\displaystyle \gD^{query}$. For example, $\gD^{query}$ is used in Reptile~\cite{nichol2018first} in evaluation steps. We use the loss function $\ell(v,y)$ to measure the discrepancy between the predicted score vector $v\in \mathbb{R}^C$ and the true label $y \in \{1,...,C\}$.

\noindent \textbf{Notation.} For an integer $n$, we denote $[n]$ as the abbreviation of the index set $\{1,...,n\}$. We use $\odot$ to denote the element-wise product operator. We say a function $g: \mathbb{R}^p \mapsto \mathbb{R}$ is $G$-Lipschitz continuous if $|g(\theta) - g(\theta')| \le  G\|\theta -\theta'\|_2$, and $g$ is $H$-smooth if it obeys  $\|\nabla g(\theta)-\nabla g(\theta')\|_2 \leq  H\|\theta - \theta'\|_2$.

\subsection{Meta-Learning with Model Capacity Constraint}

Our ultimate goal is to learn a good initialization of parameters for a convolutional neural network $f_{\theta}: \gX \mapsto \gY$, where $\theta$ is the model parameters set, from a set of training tasks such that the learned initialization network generalizes well to future unseen tasks. Inspired by the recent remarkable success of MAML~\cite{finn2017model} and the strong generalization capability of sparse deep learning models~\cite{Frankle2019TheLT,arora2018stronger}, during sparse(or network pruning) phase, we propose to learn from previous task experience a sparse subnetwork started from which the future task-specific networks can be efficiently learned using first-order optimization methods. To this end, we introduce the following layer-wise sparsity constrained stochastic first-order meta-learning formulation:
\begin{equation}\label{prob:population}
\resizebox{.9\hsize}{!}{$\min\limits_{\theta} \mathcal{R}(\theta):= \mathbb{E}_{T \sim p(\gT)} \left[\gL_{\gD^{query}_{T}}\left(\theta - \eta \nabla_{\theta}\gL_{\gD^{supp}_{T}}(\theta)\right)\right], \ \st \|\theta_l\|_0\leq\evk_l, \ l\in [L],$}
\end{equation}
where $\gL_{\gD^{supp}_{T}}(\theta)=\frac{1}{NK}\sum_{(\vx_c^k,y_c^k)\in \gD^{supp}_{T}}\ell(f_\theta(\vx_c^k),y_c^k)$ is the empirical risk for task $T$ and $\gL_{\gD^{query}_{T}}(\theta)$ is similarly defined as the loss evaluated over the query set and $\eta$ is the learning rate. In the constraint, $||\theta_l||_0$ denotes the number of non-zero entries in the parameters of $l$-th layer $\theta_l$ which is required to be no larger than a user-specified sparsity level $k_l$, and $\displaystyle L$ is the total number of network layers.

In general, the mathematical formulation of task distribution $p(\gT)$ is unknown but we usually have access to a set of i.i.d. training tasks $S=\{T_i\}_{i=1}^M$ sampled from $p(\gT)$. Thus the following empirical version of the population form in~\eqref{prob:population} is alternatively considered for training:
\begin{equation}\label{prob:empirical}
\resizebox{.9\hsize}{!}{$\min\limits_{\theta} \mathcal{R}_S(\theta ):= \frac{1}{M}\sum_{i=1}^M \left[\gL_{\gD^{query}_{T_i}}\left(\theta - \eta \nabla_{\theta}\gL_{\gD^{supp}_{T_i}}(\theta)\right)\right], \ \st \|\theta_l\|_0\leq\evk_l, \ l\in [L].$}
\end{equation}
To compare with MAML, our model shares an identical objective function, but with the layer-wise sparsity constraints $\|\theta_l\|_0\le k_l$ imposed for the purpose of enhancing learnability of the over-parameterized meta-initialization network. In view of the ``lottery ticket hypothesis''~\cite{Frankle2019TheLT}, the model in~\eqref{prob:empirical} can be interpreted as a first-order meta-learner for estimating a subnetwork, or a ``\emph{winning ticket}'', for future task learning. Inspired by the strong statistical efficiency and generalization guarantees of sparsity models~\cite{yuan2018gradient,arora2018stronger}, we will very shortly show that such a subnetwork is able to achieve advantageous generalization performance over the dense initialization networks learned by vanilla MAML.

\subsection{Generalization Analysis}

We provide in this section a task-level generalization performance analysis for the proposed model in~\eqref{prob:empirical}. Let $p$ be the total number of parameters in the over-parameterized network and $\Theta \subseteq \mathbb{R}^p$ be the domain of interest for $\theta$. Let $k=\sum_{l=1}^L k_l$ be the total desired sparsity level of the subnetwork. The following uniform concentration bound is our main result.

\begin{theorem}\label{thrm:universe_generalizaion}
Assume that the domain of interest $\Theta $ is bounded by $R$ and the loss function $\ell(f_\theta(\vx),y)$ is $G$-Lipschitz continuous and $H$-smooth with respect to $\theta$. Suppose that $0\le\ell(f_\theta(\vx),y) \le B$ for all pairs $\{f_\theta(\vx), y\}$. Then for any $\delta\in (0,1)$, with probability at least $1-\delta$ over the random draw of $S$, the generalization gap is uniformly upper bounded for all $\theta$ satisfying $\|\theta_l\|_0\le k_l, l\in [L]$ as
\[
\begin{aligned}
\left|\mathcal{R}(\theta)  - \mathcal{R}_S(\theta)\right| \le \mathcal{O}\left(B\sqrt{\frac{k\log(p\sqrt{M}GR(1+\eta H)/(Bk)) + \log(1/\delta) }{M}}\right).
\end{aligned}
\]
\end{theorem}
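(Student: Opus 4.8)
The plan is to establish the bound by a covering-number argument for uniform concentration, tailored to exploit the layer-wise sparsity constraint. For a task $T$, define the one-step-adapted meta-loss $g_T(\theta) := \gL_{\gD^{query}_{T}}\big(\theta - \eta\nabla_\theta \gL_{\gD^{supp}_{T}}(\theta)\big)$, so that $\mathcal{R}(\theta) = \E_{T\sim p(\gT)}[g_T(\theta)]$ and $\mathcal{R}_S(\theta) = \frac{1}{M}\sum_{i=1}^M g_{T_i}(\theta)$. Since the theorem controls the gap uniformly over the feasible set $\Theta_k := \{\theta\in\Theta : \|\theta_l\|_0\le k_l,\ l\in[L]\}$, I would first bound the gap at a finite $\epsilon$-net of $\Theta_k$ via a pointwise tail bound and a union bound, then extend to all feasible $\theta$ through Lipschitz continuity, and finally tune the net resolution $\epsilon$.

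Two ingredients drive the argument. First, I would show each $g_T$ is bounded in $[0,B]$ and is $G(1+\eta H)$-Lipschitz in $\theta$. Boundedness is immediate as $g_T$ is an average of losses $\ell\in[0,B]$. For the Lipschitz constant, the inner update $\Phi_T(\theta) := \theta - \eta\nabla_\theta\gL_{\gD^{supp}_{T}}(\theta)$ satisfies $\|\Phi_T(\theta)-\Phi_T(\theta')\|_2 \le \|\theta-\theta'\|_2 + \eta\|\nabla\gL_{\gD^{supp}_T}(\theta) - \nabla\gL_{\gD^{supp}_T}(\theta')\|_2 \le (1+\eta H)\|\theta-\theta'\|_2$ by $H$-smoothness, and composing with the $G$-Lipschitz map $\gL_{\gD^{query}_T}$ gives the claim; consequently $\mathcal{R}$ and $\mathcal{R}_S$, as an expectation and an average of such functions, are themselves $G(1+\eta H)$-Lipschitz. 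Second, I would bound the covering number of $\Theta_k$: any feasible $\theta$ has overall sparsity $\|\theta\|_0\le k$, so $\Theta_k$ lies in a union of at most $\binom{p}{k}\le (ep/k)^k$ coordinate subspaces of dimension $k$, each intersected with the radius-$R$ ball; covering each ball at scale $\epsilon$ costs $(3R/\epsilon)^k$ points, yielding $\log N(\epsilon) \le k\log(ep/k) + k\log(3R/\epsilon)$.

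To assemble, fix a net $\mathcal{N}$ of cardinality $N(\epsilon)$. At each $\theta'\in\mathcal{N}$, Hoeffding's inequality for the i.i.d. bounded variables $g_{T_i}(\theta')\in[0,B]$ gives $\Pr[|\mathcal{R}(\theta')-\mathcal{R}_S(\theta')| > t]\le 2\exp(-2Mt^2/B^2)$; a union bound over $\mathcal{N}$ then yields, with probability at least $1-\delta$, the uniform-over-the-net estimate $|\mathcal{R}(\theta')-\mathcal{R}_S(\theta')|\le B\sqrt{(\log N(\epsilon)+\log(2/\delta))/(2M)}$. For an arbitrary feasible $\theta$, picking the nearest net point $\theta'$ ($\|\theta-\theta'\|_2\le\epsilon$) and applying the Lipschitz bound to $\mathcal{R}$ and $\mathcal{R}_S$ gives $|\mathcal{R}(\theta)-\mathcal{R}_S(\theta)|\le B\sqrt{(\log N(\epsilon)+\log(2/\delta))/(2M)} + 2G(1+\eta H)\epsilon$. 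Choosing $\epsilon = B/(G(1+\eta H)\sqrt{M})$ renders the discretization term $2B/\sqrt{M}\le 2B\sqrt{k/M}$, absorbing it into the leading order, while substituting this $\epsilon$ into $\log N(\epsilon)$ produces the claimed logarithmic term $k\log\big(pRG(1+\eta H)\sqrt{M}/(Bk)\big)$, completing the bound after collecting constants.

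The step I expect to be the main obstacle --- and the only genuinely meta-specific one --- is certifying that $g_T$ is Lipschitz with constant $G(1+\eta H)$: this is where $H$-smoothness is essential, since it controls how the inner gradient step distorts distances, and it is precisely the factor distinguishing this bound from a vanilla empirical-risk-minimization guarantee. The remaining ingredients (the sparse covering count yielding the $k\log(p/k)$ scaling, Hoeffding with a union bound, and the tuning of $\epsilon$) are routine, though two points merit care: that the empirical averages $\gL_{\gD^{supp}_T}$ and $\gL_{\gD^{query}_T}$ inherit $H$-smoothness and $G$-Lipschitzness from $\ell$, and that, since the pointwise and Lipschitz estimates hold on the entire radius-$R$ ball, the net itself need not be confined to $\Theta_k$ --- it suffices that every feasible $\theta$ lie within $\epsilon$ of some net point.
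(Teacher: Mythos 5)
Your proposal is correct and follows essentially the same route as the paper: both hinge on showing the one-step-adapted meta-loss is $G(1+\eta H)$-Lipschitz, covering each $k$-dimensional sparse slice of the radius-$R$ ball, paying a $\binom{p}{k}\le(ep/k)^k$ union-bound factor over supports, applying Hoeffding on the net, and tuning $\epsilon\sim B/\sqrt{M}$. The only difference is organizational --- the paper first proves a dense-domain lemma and then unions over supports, whereas you cover the union of subspaces in one pass --- which yields the identical bound.
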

In comparison to the $\mathcal{O}\left(\sqrt{p/M}\right)$ uniform bound established in Lemma~\ref{lemma:universe_support} (see Appendix~\ref{append:proof_thrm_1}) for dense networks, the uniform bound established in Theorem~\ref{thrm:universe_generalizaion} is substantially stronger when $k\ll p$, which shows the benefit of network capacity constraint for generalization.

Specially for margin-based multiclass classification, let us consider the margin operator $\mathcal{M}(v,y):=\max_j[v]_j - [v]_y$ associated with the score prediction vector $v\in \mathbb{R}^C$ and label $y\in\{1,...,C\}$. Let $\ell_\gamma (f_\theta(\vx),y)=h_\gamma(\mathcal{M}(f_\theta(\vx),y))$ be a \emph{surrogate} loss of the binary loss (i.e., $\mathds{1}[y\neq \arg\max_{j}[f_\theta(\vx)]_j]$) defined with respect to proper $\gamma$-margin based loss $h_\gamma$ such as the hinge/ramp losses and their smoothed variants~\cite{pillutla2018smoother}. By definition, we must have $\mathds{1}[y\neq \arg\max_{j}[f_\theta(\vx)]_j]\le \ell_\gamma (f_\theta(\vx),y)$. In this case, we denote $\mathcal{R}_{\gamma,S}$ the meta-training risk with loss function $\ell_\gamma$ and $\mathcal{\tilde R}_{\gamma}$ the corresponding population risk in which the task-level query loss $\gL_{\gD^{query}_{T}}$ is evaluated using binary loss as classification error. Then as a direct consequence of Theorem~\ref{thrm:universe_generalizaion}, we can establish the following result for margin-based prediction.

\begin{corollary}\label{corol:universe_generalizaion_multiclass}
Suppose that the margin-based loss $\ell_\gamma$ is used for model training. Then under the conditions in Theorem~\ref{thrm:universe_generalizaion}, for any $\delta\in (0,1)$, with probability at least $1-\delta$ the following bound holds for all $\theta$ satisfying $\|\theta_l\|_0\le k_l, l\in [L]$:
\[
\begin{aligned}
\mathcal{\tilde R}_\gamma(\theta) \le \mathcal{R}_{\gamma,S}(\theta) + \mathcal{O}\left(B\sqrt{\frac{k\log(p\sqrt{M}GR(1+\eta H)/(Bk)) + \log(1/\delta) }{M}}\right).
\end{aligned}
\]
\end{corollary}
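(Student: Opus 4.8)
The plan is to derive the corollary as an almost immediate consequence of Theorem~\ref{thrm:universe_generalizaion} applied to the surrogate loss $\ell_\gamma$, combined with the pointwise domination of the binary classification loss by $\ell_\gamma$. Concretely, I would first invoke Theorem~\ref{thrm:universe_generalizaion} with $\ell$ replaced by $\ell_\gamma$ to control the two-sided gap $|\mathcal{R}_\gamma(\theta)-\mathcal{R}_{\gamma,S}(\theta)|$, where $\mathcal{R}_\gamma$ denotes the population meta-risk whose query loss is also evaluated with $\ell_\gamma$. I would then use the defining inequality $\mathds{1}[y\neq \arg\max_j[f_\theta(\vx)]_j]\le \ell_\gamma(f_\theta(\vx),y)$ to pass from $\mathcal{R}_\gamma$ down to the binary-query population risk $\mathcal{\tilde R}_\gamma$, and finally chain the two estimates.

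First I would check that $\ell_\gamma$ inherits the hypotheses of Theorem~\ref{thrm:universe_generalizaion}, i.e.\ that it is $G$-Lipschitz, $H$-smooth, and bounded by $B$ as a function of $\theta$ (possibly with constants differing from those of $\ell$ only by universal factors, so that the form of the bound is preserved). Boundedness is immediate for the ramp/smoothed losses. Lipschitz continuity follows by composing the Lipschitz map $\theta\mapsto f_\theta(\vx)$ with the $1$-Lipschitz margin operator $\mathcal{M}(\cdot,y)$ and the Lipschitz scalar map $h_\gamma$. The only delicate point is $H$-smoothness: the margin operator $\mathcal{M}(v,y)=\max_j[v]_j-[v]_y$ is nonsmooth because of the $\max$, so plain hinge/ramp losses do not qualify; this is precisely why the statement restricts attention to the smoothed variants of~\cite{pillutla2018smoother}, for which $h_\gamma\circ\mathcal{M}$ is continuously differentiable with Lipschitz gradient. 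Verifying this smoothness and tracking how the smoothing parameter enters the effective constant $H$ is the main (and essentially only) technical obstacle; everything downstream is routine.

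Granting the hypotheses, Theorem~\ref{thrm:universe_generalizaion} yields, with probability at least $1-\delta$ and uniformly over all $\theta$ with $\|\theta_l\|_0\le k_l$,
\[
\mathcal{R}_\gamma(\theta)\le \mathcal{R}_{\gamma,S}(\theta)+\mathcal{O}\!\left(B\sqrt{\tfrac{k\log(p\sqrt{M}GR(1+\eta H)/(Bk))+\log(1/\delta)}{M}}\right).
\]
It then remains to show $\mathcal{\tilde R}_\gamma(\theta)\le \mathcal{R}_\gamma(\theta)$. Both risks are evaluated at the same adapted parameter $\theta-\eta\nabla_{\theta}\gL_{\gD^{supp}_{T}}(\theta)$ (the gradient step uses the differentiable surrogate in either case, since the binary loss is nonsmooth), so they differ only in how the query loss is scored. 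Applying the pointwise inequality $\mathds{1}[y\neq\arg\max_j[f_\theta(\vx)]_j]\le \ell_\gamma(f_\theta(\vx),y)$, averaging over each query set, and then over the task distribution $p(\gT)$, gives exactly $\mathcal{\tilde R}_\gamma(\theta)\le \mathcal{R}_\gamma(\theta)$ for every feasible $\theta$. Substituting this into the displayed bound yields the claimed inequality, and since both steps hold on the same probability-$(1-\delta)$ event and simultaneously for all feasible $\theta$, the uniform high-probability statement follows.
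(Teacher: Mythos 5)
Your proposal is correct and follows essentially the same route as the paper: apply Theorem~\ref{thrm:universe_generalizaion} to the surrogate loss $\ell_\gamma$ to bound $\mathcal{R}_\gamma(\theta)-\mathcal{R}_{\gamma,S}(\theta)$, then use the pointwise domination of the binary loss by $\ell_\gamma$ (with the inner adaptation step unchanged) to conclude $\mathcal{\tilde R}_\gamma\le\mathcal{R}_\gamma$ and chain the two inequalities. Your additional remarks on verifying that $\ell_\gamma$ inherits the Lipschitz/smoothness/boundedness hypotheses are a welcome extra level of care that the paper's one-line proof leaves implicit.
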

\begin{remark}
We comment that the above $\mathcal{O}(\sqrt{k/M})$ margin bound derived in the context of sparse meta-learning can be readily extended to sparse deep nets training. Also, the bound can be easily generalized for arbitrary convex surrogates (e.g., cross-entropy loss) of binary loss under proper regularity conditions.
\end{remark}

\section{Algorithm}

We have implemented the proposed model in Equation~\ref{prob:empirical} based on Reptile~\cite{nichol2018first} (see Algorithm~\ref{alg:reptile}) which is a scalable method for optimization-based meta-learning in form of Equation~\ref{prob:empirical} but without layer-wise sparsity constraint. In order to handle the sparsity constraint, we follow the principles behind the widely applied \emph{dense-sparse-dense} (DSD)~\cite{han2016dsd} and \emph{iterative hard thresholding} (IHT)~\cite{jin2016training} network pruning algorithms to alternate the Reptile iteration between pruning insignificant weights in each layer and retraining the pruned network.

\subsection{Main Algorithm: Reptile with Iterative Network Pruning}

The algorithm of our network-pruning-based Reptile method is outlined in Algorithm~\ref{alg:reptile_dsd}. The learning procedure contains a pre-training phase followed by an iterative procedure of network pruning and retraining. We would like to stress that since our ultimate goal is not to do network compression, but to reduce meta-overfitting via controlling the sparsity level of the meta-initialization network, the final output of our algorithm is typically dense after the retraining phase, which has been evidenced in practice to be effective for improving the generalization performance during testing phase. In the following subsections, we describe the key components of our algorithm in details.
\begin{algorithm}[tb]
\caption{Reptile with Iterative Network Pruning}
\label{alg:reptile_dsd}
\SetKwInOut{Input}{Input}\SetKwInOut{Output}{Output}\SetKw{Initialization}{Initialization}
\Input{inner loop learning rate $\eta$, outer loop learning rate $\beta$, layer-wise sparsity level $\{k_l\}_{l=1}^L$, mini-batch batch size $s$ for meta training.}
\Output{$\theta^{(t)}$.}
\Initialization{Randomly initialize $\theta^{(0)}$.}

\tcc{\underline{\textbf{Pre-training with Reptile}}}

\While{the termination condition is not met}{
 $\theta^{(0)}=Reptile(\theta^{(0)}, \eta, \beta, s)$;
}
\For{$t=1, 2, ...$}{
\tcc{\underline{\textbf{Pruning phase}}}

Generate a network zero-one mask $\gM^{(t)}$ whose non-zero entries at each layer $l$ are those top $k_l$ entries in $\theta^{(t)}_l$;

Compute $\theta^{(t)}_{\mathcal{M}}=\theta^{(t)}\odot \gM^{(t)}$;

\tcc{Subnetwork fine-tune with Reptile}
\While{the termination condition is not met}{
 $\theta^{(t)}=Reptile(\theta^{(t)}_{\mathcal{M}}, \eta, \beta, s)$;\\
}
\tcc{\underline{\textbf{Retraining phase}}}


\While{the termination condition is not met}{
 $\theta^{(t)}=Reptile(\theta^{(t)}, \eta, \beta, s)$;
}
}
\end{algorithm}

\subsubsection{Model Pretraining}

For model pre-training, we run a few number of Reptile iteration rounds to generate a relatively good initialization. In each loop of the Reptile iteration, we first sample a mini-batch of meta-tasks $\{T_i\}_{i=1}^s$ from the task distribution $p(\mathcal{T})$. Then for each task $T_i$, we compute the adapted parameters via (stochastic) gradient descent as $\tilde \theta_{T_i} = \theta^{(0)} - \eta \nabla_{\theta}\gL_{\gD^{supp}_{T_i}}(\theta^{(0)})$, where $\tilde \theta_{T_i} $ denotes the task-specific parameters learned from each task $T_i$, $\theta^{(0)}$ is the current initialization of model parameters, $\eta$ is the inner-task learning rate, and $\gD^{supp}_{T_i}$ denotes the support set of task $T_i$. When all the task-specific parameters are updated, the initialization parameters will be updated according to $\theta^{(0)} = \theta^{(0)} + \evbeta \left(\frac{1}{s}\sum_{i=1}^s \tilde \theta_{T_i} - \theta^{(0)}\right)$ with learning rate $\beta$. Here we follow Reptile to use $\frac{1}{s}\sum_{i=1}^s \tilde \theta_{T_i} - \theta^{(0)}$ as an approximation to the negative meta-gradient, which has been evidenced to be effective for scaling up the MAML-type first-order meta-learning models~\cite{nichol2018first}.

\begin{algorithm}[h]\caption{Reptile Algorithm~\cite{nichol2018first}}
\label{alg:reptile}
\SetKwInOut{Input}{Input}\SetKwInOut{Output}{Output}
\Input{model parameters $\phi$, inner loop learning rate $\eta$, outer loop learning rate $\beta$, mini-batch batch size $s$ for meta training.}
\Output{the updated $\phi$.}

Sample a mini-batch tasks $\{T_i\}_{i=1}^s$ of size $s$;

For each task $T_i$, compute the task-specific adapted parameters using gradient descent:
\[
\tilde \phi_{T_i} = \phi - \eta \nabla_{\phi}\gL_{\gD^{supp}_{T_i}}(\phi);
\]
Update the parameters: $\phi = \phi + \evbeta \left(\frac{1}{s}\sum_{i=1}^s \tilde \phi_{T_i} - \phi\right)$.
\end{algorithm}

\subsubsection{Iterative Network Pruning and Retraining}

After model pre-training, we proceed to the main loop of our Algorithm~\ref{alg:reptile_dsd} that carries out iterative network pruning and retraining.

\textbf{Pruning phase.} In this phase, we first greedily truncate out of the model a portion of near-zero parameters which are unlikely to contribute significantly to the model performance. To do so, we generate a network binary mask $\gM^{(t)}$ whose non-zero entries at each layer $l$ are those top $k_l$ (in magnitude) entries in $\theta^{(t)}_l$, and compute $\theta^{(t)}_{\mathcal{M}}=\theta^{(t)}\odot \gM^{(t)}$ as the sparsity restriction of $\theta^{(t)}$. Then we fine-tune the subnetwork over the mask $\gM^{(t)}$ by applying Reptile restrictively to this subnetwork with initialization $\theta^{(t)}_{\mathcal{M}}$. Our numerical experience suggests that sufficient steps of subnetwork fine-tuning tends to substantially improve the stability and convergence behavior of the method.

The fine-tuned subnetwork $\theta^{(t)}_{\mathcal{M}}$ at the end of the pruning phase is expected to reduce the chance of overfitting to noisy data. However, it is also believed that such subnetwork will reduce the capacity of the network, which could in turn lead to potentially biased learning with higher training loss. To remedy this issue, inspired by the retraining trick introduced in~\cite{han2016dsd} for network pruning, we propose to restore the pruned weights that would be beneficial for enhancing the model representation power to improve the overall generalization performance.

\textbf{Retraining phase.} In this phase, the layer-wise sparsity constraints are removed and the pruned parameters are re-activated for fine-tuning. The retraining procedure is almost identical to the pre-training phase, but with the main difference that the former is initialized with the subnetwork generated by the pruning phase while the latter uses random initialization. Such a retraining operation restores the representation capacity of the pruned parameters, which tends to lead to improved generalization performance in practice.
For theoretical justification, roughly speaking, since the sparse meta-initialization network obtained in the pruning phase generalizes well in light of Theorem~\ref{thrm:universe_generalizaion}, it is expected to serve as a good initialization for future retraining via gradient descent. Then according to the stability theory of gradient descent methods~\cite{hardt2016train}, the output dense network will also generalize well if the retraining phase converges quickly.

\subsection{Two Substantialized Implementations}

\noindent \textbf{Reptile with DSD pruning.} The DSD method is an effective network  pruning approach for preventing the learned model from capturing noise during the training~\cite{han2016dsd}. By implementing the main loop with $t=1$, the proposed Algorithm~\ref{alg:reptile_dsd} reduces to a DSD-based Reptile method for first-order meta-learning.

\noindent \textbf{Reptile with IHT pruning.} The IHT method~\cite{jin2016training} is another representative network pruning approach which shares a similar dense-sparse-dense spirit with DSD. Different from the one-shot weight pruning and network training by DSD, IHT is designed to perform multiple rounds of iteration between pruning and retraining, and hence is expected to have better chance to find an optimal sparse subnetwork than DSD does. By implementing the main loop of Algorithm~\ref{alg:reptile_dsd} with $t>1$, we actually obtain a variant of Reptile with IHT-type network pruning.

\vspace{-0.2em}
\section{Experiments}\label{experiment}
\vspace{-0.5em}
In this section, we carry out a numerical study for algorithm performance evaluation aiming to answer the following three questions empirically: (Q1) Section~\ref{ssect:classification}: \emph{Does our method contribute to improve the generalization performance?} (Q2) Section~\ref{ssect:ablation}: \emph{What roles do pre-training phase and retraining phase play in our method?} (Q3) Section~\ref{ssect:complex}: \emph{Can our method work on more complex models?}

\subsection{Few-Shot Classification Performances}\label{ssect:classification}

We first evaluate the prediction performance of our method for few-shot classification tasks on two popular benchmark datasets: MiniImageNet~\cite{vinyals2016matching} and TieredImageNet~\cite{ren2018meta}. We have also evaluated our method on Omniglot~\cite{lake2011one} with numerical results relegated to Appendix~\ref{Appdx:Omniglot_complete_results} due to space limit. The network used in our experiments is consistent with that considered for Reptile\cite{nichol2018first}. We test with varying channel number $\{32, 64, 128, 256\}$ in each convolution layer to show the robustness of our algorithms to meta-overfitting. See Appendix~\ref{Appdx:Detailed_Settings} for more details about Model, datasets and hyperparameters.

\subsubsection{MiniImageNet}\label{miniimagenet_exp}
The MiniImageNet dataset consists of 64 training classes, 12 validation classes and 24 test classes. For DSD-based Reptile, with $32$ channels, we set the iteration numbers for the pre-traning, pruning and retraining phases respectively as $3\times10^4$, $5\times 10^4$ and $2\times 10^4$, while with $64, 128, 256$ channels, the corresponding number is $3\times 10^4$, $6\times 10^4$ and $10^4$ respectively. For IHT-based Reptile model training, we first pre-train the model for $2\times 10^4$ iterations. Then we iterate between the sparse model fine-tuning (with $1.5\times 10^4$ iterations) and dense-model retraining (with $5\times 10^3$ iterations) for $t=4$ rounds. The setting of other model training related parameters is identical to those in~\cite{nichol2018first}.
\begin{table}[h]
\caption{Results on MiniImageNet under varying number of channels and pruning rates.
}
\label{MiniImageNet5wayresults_table_article}
\begin{center}
\begin{tabular}{|l|l|l|cc|}
\hline
\bf Methods  &\bf Backbone  &\bf Rate   &\bf 5-way 1-shot   &\bf 5-way 5-shot\\
\hline
\multirow{4}*{Reptile baseline}  &32-32-32-32 &0$\%$                     &50.30$\pm$0.40$\%$                &64.27$\pm$0.44$\%$ \\
                                ~&64-64-64-64 &0$\%$                     &51.08$\pm$0.44$\%$                &65.46$\pm$0.43$\%$ \\
                                ~&128-128-128-128 &0$\%$                 &49.96$\pm$0.45$\%$                &64.40$\pm$0.43$\%$ \\
                                ~&256-256-256-256 &0$\%$                 &48.60$\pm$0.44$\%$                &63.24$\pm$0.43$\%$ \\
\hline
\multirow{3}*{CAVIA baseline}    &32-32-32-32    &0$\%$                  &47.24$\pm$0.65$\%$                &59.05$\pm$0.54$\%$ \\
                                ~&128-128-128-128   &0$\%$                   &49.84$\pm$0.68$\%$                &64.63$\pm$0.54$\%$ \\
                                ~&512-512-512-512   &0$\%$               &51.82$\pm$0.65$\%$                &65.85$\pm$0.55$\%$ \\
\hline
\multirow{4}*{DSD+Reptile}       &32-32-32-32   &$40\%$                  &\bf50.83$\pm$0.45$\%$             &\bf65.24$\pm$0.44$\%$ \\
                                ~&64-64-64-64   &$30\%$                  &\bf 51.91$\pm$0.45$\%$            &\bf67.23$\pm$0.43$\%$ \\
                                ~&128-128-128-128 &$50\%$                &\bf 52.08$\pm$0.45$\%$            &\bf68.87$\pm$0.42$\%$ \\
                                ~&256-256-256-256 &$60\%$                &\bf53.00$\pm$0.45$\%$             &\bf68.04$\pm$0.42$\%$ \\
\hline
\multirow{4}*{IHT+Reptile}       &32-32-32-32  &$20\%$                   &50.26$\pm$0.47$\%$                &63.63$\pm$0.45$\%$ \\
                                ~&64-64-64-64  &$40\%$                   &\bf 52.59$\pm$0.45$\%$            &\bf 67.41$\pm$0.43$\%$ \\
                                ~&128-128-128-128  &$40\%$               &\bf52.73$\pm$0.45$\%$             &\bf68.69$\pm$0.42$\%$ \\
                                ~&256-256-256-256  &$60\%$               &49.85$\pm$0.44$\%$                &66.56$\pm$0.42$\%$ \\
\hline
\end{tabular}
\end{center}
\vspace{-1em}
\end{table}

\textbf{Results.} The experimental results are presented in Table \ref{MiniImageNet5wayresults_table_article} and some additional results are provided in Table~\ref{MiniImageNet5wayresults_table} in Appendix~\ref{apdx_c}.
From these results, we can observe that our methods consistently outperform the considered baselines.
The key observations are highlighted below:
\begin{itemize}[leftmargin=*]
  \item In the 32-channel setting in which the model is less prone to overfit, when applying DSD-based Reptile with $40\%$ pruning rate, the accuracy gain is $0.5\%$ on 5-way 1-shot tasks and $1\%$ on 5-way 5-shot tasks. In the 64-channel setting, our IHT-based Reptile approach respectively improves about $1.5\%$ and $1.95\%$ over the baselines on 5-way 1-shot tasks and 5-way 5-shot tasks. In the setting of 128-channel, the accuracy of DSD-based Reptile on 5-way 1-shot tasks is nearly $3\%$ higher than the baseline while on 5-way 5-shot tasks the gain is about $4.47\%$.
  \item In the 128-channel setting, the accuracies of our approaches are all higher than those of CAVIA, and the highest gain of accuracy is over $3\%$.
\end{itemize}

\begin{wrapfigure}{r}{0.6\textwidth}
    \includegraphics[width=0.38\textheight]{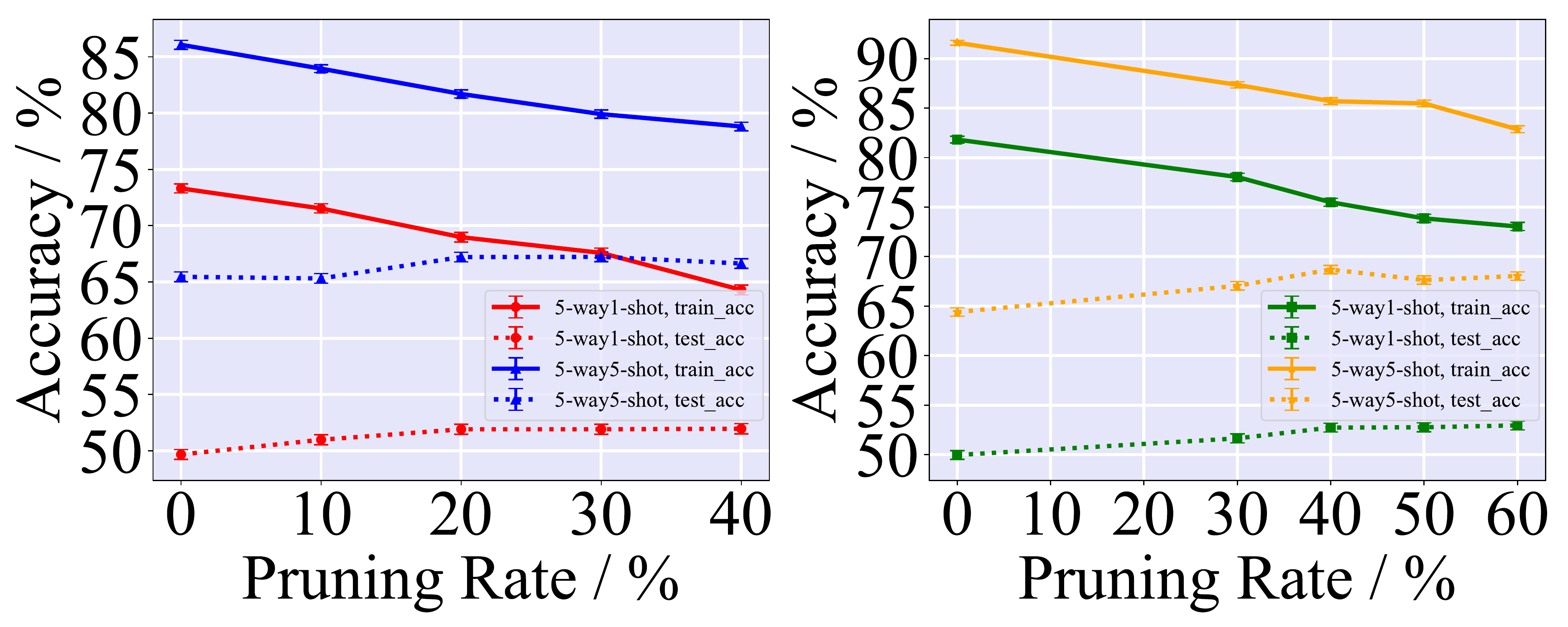}
	\caption{The generalization performance of DSD-based and IHT-based Reptile on 5-way 1-shot and 5-way 5-shot tasks under 64-channel settings. \emph{\bf Left: }DSD-based Reptile; \emph{\bf Right: }IHT-based Reptile.}
\label{Fig:Generalization}
\end{wrapfigure}

It is also worth noting from Table \ref{MiniImageNet5wayresults_table_article} that  the accuracy of our algorithms tends to increase as the channel size increases while the baselines behave oppositely. Although in 256-channel case the performance of IHT-based approach drops compared with the 128-channel setting, it still achieves $\sim1.2\%$ accuracy gain over the baseline on 5-way 1-shot tasks and $\sim3.32\%$ on 5-way 5-shot tasks. These results clearly confirm the robustness of our algorithms to the meta-overfitting suffered from the over-parameterization of CNNs.

Figure~\ref{Fig:Generalization} shows the evolving curves of training and testing accuracy under varying pruning rates from $0$ to $40\%$ for DSD and IHT based Reptile. From these curves we can clearly observe that the gap between training accuracy and testing accuracy reduces when the pruning rate increases, which confirms the predictions of Theorem~\ref{thrm:universe_generalizaion} about the impact of network capacity on generalization.

\subsubsection{TieredImageNet}\label{tieredimagenet_exp}
\vspace{-1em}
The TieredImageNet dataset consists of 351 training classes, 97 validation classes and 160 test classes. For TieredImageNet dataset \cite{ren2018meta}, in DSD-based Reptile case, we set iteration numbers for the pre-training, pruning and retraining phase respectively as $3\times10^4$, $5\times 10^4$ and $2\times 10^4$ for all cases. In IHT-based Reptile, the values of iteration number are the same as those used in the previous experiments for MiniImageNet.
\begin{table}
\caption{Results on TieredImageNet under varying number of channels and pruning rates.}
\label{TieredImageNet5wayresults_table_article}
\renewcommand{\arraystretch}{1.1}
\begin{center}
\vspace{-1em}
\begin{tabular}{|l|l|l|cc|}
\hline
\bf Methods  &\bf Backbone  &\bf Rate  &\bf 5-way 1-shot   &\bf 5-way 5-shot\\
\hline
\multirow{4}*{Reptile baseline} &32-32-32-32        &0$\%$           &50.52$\pm$0.45$\%$                &64.63$\pm$0.44$\%$ \\
                               ~&64-64-64-64        &0$\%$           &51.98$\pm$0.45$\%$                &67.70$\pm$0.43$\%$ \\
                               ~&128-128-128-128    &0$\%$           &53.30$\pm$0.45$\%$                &69.29$\pm$0.42$\%$ \\
                               ~&256-256-256-256    &0$\%$           &54.62$\pm$0.45$\%$                &68.06$\pm$0.42$\%$ \\
\hline

\multirow{4}*{DSD+Reptile}      &32-32-32-32        &10$\%$          &\bf50.94$\pm$0.46$\%$             &64.65$\pm$0.44$\%$ \\
                               ~&64-64-64-64        &10$\%$          &\bf 52.62$\pm$0.46$\%$            &66.69$\pm$0.43$\%$ \\
                               ~&128-128-128-128    &$10\%$          &53.39$\pm$0.46$\%$                &67.22$\pm$0.43$\%$ \\
                               ~&256-256-256-256    &$20\%$          &\bf 54.98$\pm$0.45$\%$            &67.98$\pm$0.43$\%$ \\
\hline
\multirow{4}*{IHT+Reptile}      &32-32-32-32        &$10\%$          &50.58$\pm$0.46$\%$                &63.09$\pm$0.45$\%$ \\
                               ~&64-64-64-64        &$20\%$          &\bf53.22$\pm$0.46$\%$             &66.15$\pm$0.44$\%$ \\
                               ~&128-128-128-128    &$10\%$          &\bf53.48$\pm$0.45$\%$             &\bf69.39$\pm$0.42$\%$ \\
                               ~&256-256-256-256    &$10\%$          &\bf55.06$\pm$0.45$\%$             &67.60$\pm$0.43$\%$ \\

\hline
\end{tabular}
\end{center}
\vspace{-2.5em}
\end{table}

\textbf{Results.}\quad The experimental results are partly presented in Table~\ref{TieredImageNet5wayresults_table_article}. More experimental results are available in Table~\ref{TieredImageNet5wayresults_table} in Appendix~\ref{apdx_c}. 
In 5-way 1-shot classification tasks, both DSD-besed Reptile approach and IHT-based Reptile approach outperform the baselines in all cases. In 32-channel setting, with DSD-based Reptile approach, the improvement of accuracy is $0.42\%$ compared with baseline. In 64-channel setting, the accuracies of DSD-based Reptile and IHT-based Reptile respectively achieve $0.64\%$ and $1.24\%$ improvements. And for 256-channel setting, the best performance is also $0.44\%$ better than baseline.

However, in most 5-way 5-shot classification tasks, the performance of our method drops. We conjecture that the reason is TieredImageNet dataset, compared with MiniImageNet dataset, contains more classes.

\subsection{On the Impact of Hyperparameters}\label{ssect:ablation}

We next conduct a set of experiments on MiniImageNet to better understand the impact of pre-training and dense retraining on the task-specific testing performance.
\begin{table}
\caption{Results of ablation study in the 5-way setting. The ``$\pm$'' shows $95\%$ confidence intervals, the "P.T" means "Pre-traning" and the "R.T" means "Retraining".}
\label{ablation}
\begin{center}
\begin{tabular}{|l|c|c|cc|}
\hline
 \bf Methods   &\bf P.T&\bf R.T &\bf 5-way 1-shot    &\bf 5-way 5-shot
\\ \hline
Reptile baseline(64)       &-           &-            &51.08$\pm$0.44$\%$        &65.46$\pm$0.43$\%$ \\
Reptile baseline(128)      &-           &-            &49.96$\pm$0.45$\%$        &64.40$\pm$0.43$\%$ \\

\hline
DSD+Reptile(64, 40$\%$)    &$\surd$     &$\times$     &43.92$\pm$0.43$\%$        &60.09$\pm$0.45$\%$ \\
DSD+Reptile(128, 60$\%$)   &$\surd$     &$\times$     &47.06$\pm$0.44$\%$        &55.07$\pm$0.44$\%$ \\
IHT+Reptile(64, 40$\%$)    &$\surd$     &$\times$     &40.03$\pm$0.41$\%$        &60.59$\pm$0.45$\%$ \\
IHT+Reptile(128, 60$\%$)   &$\surd$     &$\times$     &42.01$\pm$0.42$\%$        &52.71$\pm$0.45$\%$ \\
\hline
DSD+Reptile(64, 40$\%$)    &$\times$    &$\surd$      &50.84$\pm$0.45$\%$        &66.32$\pm$0.44$\%$ \\
DSD+Reptile(128, 60$\%$)   &$\times$    &$\surd$      &51.04$\pm$0.45$\%$        &67.23$\pm$0.44$\%$ \\
IHT+Reptile(64, 40$\%$)    &$\times$    &$\surd$      &52.07$\pm$0.45$\%$        &66.90$\pm$0.43$\%$ \\
IHT+Reptile(128, 60$\%$)   &$\times$    &$\surd$      &52.58$\pm$0.45$\%$        &67.83$\pm$0.42$\%$ \\
\hline
DSD+Reptile(64, $40\%$)    &$\surd$     &$\surd$      &51.96$\pm$0.45$\%$        &66.64$\pm$0.43$\%$ \\
DSD+Reptile(128, $60\%$)   &$\surd$     &$\surd$      &52.27$\pm$0.45$\%$        &\bf68.44$\pm$0.42$\%$ \\
IHT+Reptile(64, $40\%$)    &$\surd$     &$\surd$      &\bf52.59$\pm$0.45$\%$     &\bf67.41$\pm$0.43$\%$ \\
IHT+Reptile(128, $60\%$)   &$\surd$     &$\surd$      &\bf52.95$\pm$0.45$\%$     &68.04$\pm$0.42$\%$ \\
\hline
\end{tabular}
\end{center}
\vspace{-3em}
\end{table}

We begin by performing ablation study on pre-training and retraining phases. Each time only one of them is removed from our method. For fair comparison, other settings are the same as proposed in Section \ref{miniimagenet_exp}. This study is conducted for both DSD- and IHT-based Reptile approaches. The results of the experiments are listed in Table \ref{ablation}.

\textbf{Impact of the Retraining phase.} It can be clearly seen from group of results in Table~\ref{ablation} that the retraining phase plays an important role in the accuracy performance of our method. Under the same pruning rate, without the retraining phase, the accuracy of both DSD-based and IHT-based Reptile approach drops dramatically. For an instance, in the 64-channel case with $40\%$ pruning rate, the variant of IHT-based Reptile without retraining phase suffers from a $\sim 11\%$ drop in accuracy compared with the baseline. On the other side, as shown in Figure \ref{Fig:Ablation_gap} that sparsity structure of the network does help to reduce the gap between training accuracy and testing accuracy even without the retraining phase. This confirms the benefit of sparsity for generalization gap reduction as revealed by Theorem~\ref{thrm:universe_generalizaion}. Therefore, the network pruning phase makes the model robust to overfitting but in the meanwhile tend to suffer from the deteriorated training loss. The retraining phase helps to restore the capacity of the model to further improve the overall generalization performance.
\begin{wrapfigure}{r}{0.5\textwidth}
\vspace{-0.5em}
    \includegraphics[width=0.3\textheight]{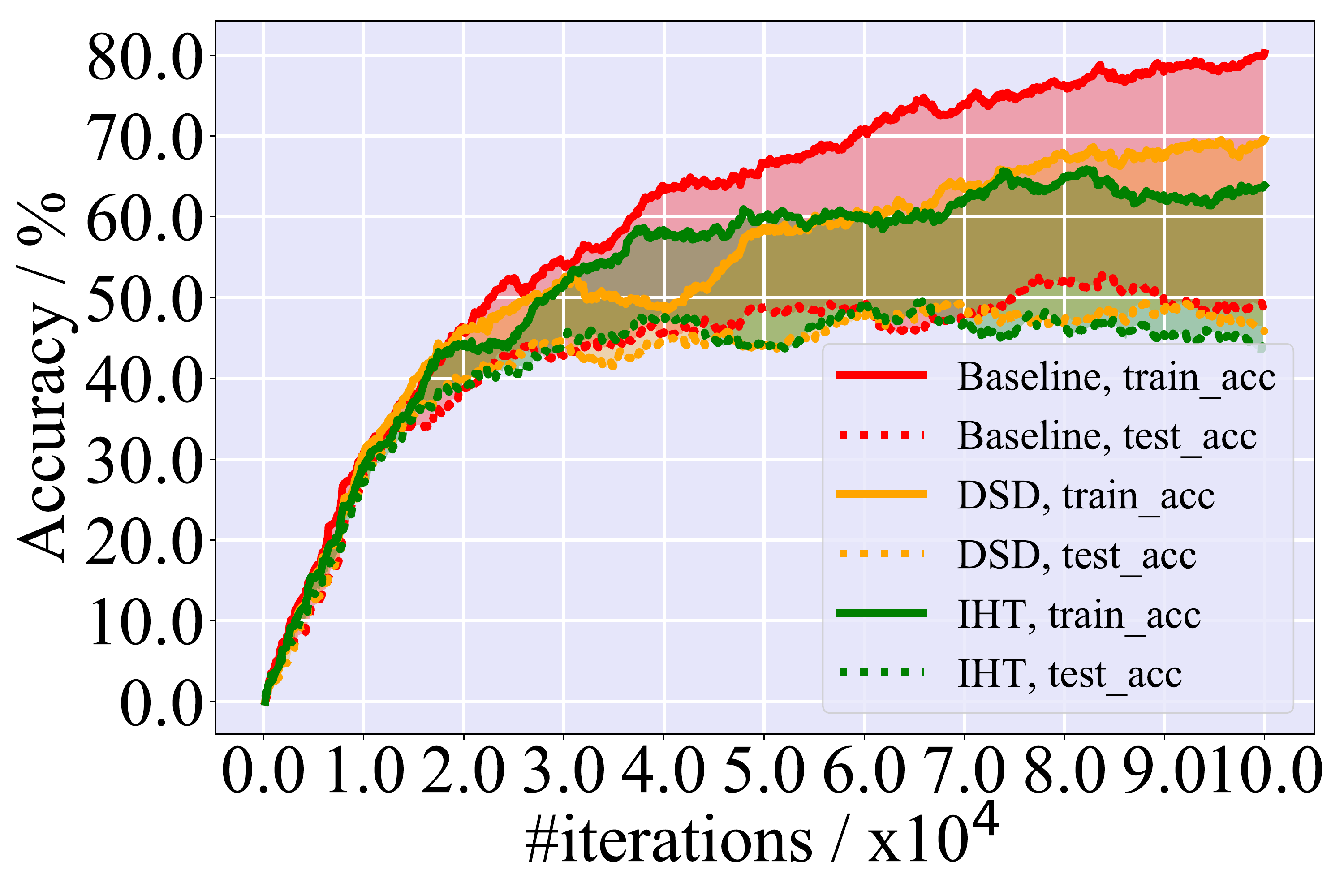}
    \vspace{-1em}
	\caption{Ablation study on retraining phase for both DSD-based Reptile and IHT-based Reptile on 64-channel case. The gap between the training accuracy and test accuracy of the variant algorithm of our method becomes smaller than that of baseline.}
\label{Fig:Ablation_gap}
\vspace{-1.5em}
\end{wrapfigure}

\textbf{Impact of the Pre-training phase.} From Table~\ref{ablation}, we observe that without pre-training phase, the variant algorithms still outperform baselines. Such results demonstrate the importance of pruning and retraining phase from another perspective that merely pruning and retraining the over-parameterized models can achieve similar empirical performance to our method. However, the variant algorithms fail to outperform our method. Since in network pruning, pre-training phase is treated as a necessary phase used to find a set of model parameters which is important~\cite{Frankle2019TheLT,han2016dsd}, we conjecture that it is the prematurely pruning before the model being well trained that leads to the drop of the performance.

We now perform experiments to further show how performance varies with different hyperparameters. The tested hyperparameters include (1)The number of pre-training and retraining iterations in DSD-based Reptile; (2) the number of iterations in an IHT pruning-retraining interval; (3) the ratio of pruning iterations in an interval. To be clear, we define $ratio=(Iter_{prune}/Iter_{interval})\%$. In experiments above, we set $Iter_{prune}=1.5\times10^4$ in a 20000-iteration IHT interval, which means the ratio is 75$\%$.
\begin{figure}[h]
\begin{center}
\centering
    \subfigure[Study on pre-training iterations]{
        \begin{minipage}[t]{0.486\linewidth}\label{SubFig:DSD_hyper_pretrain}
        \includegraphics[width=0.32\textheight]{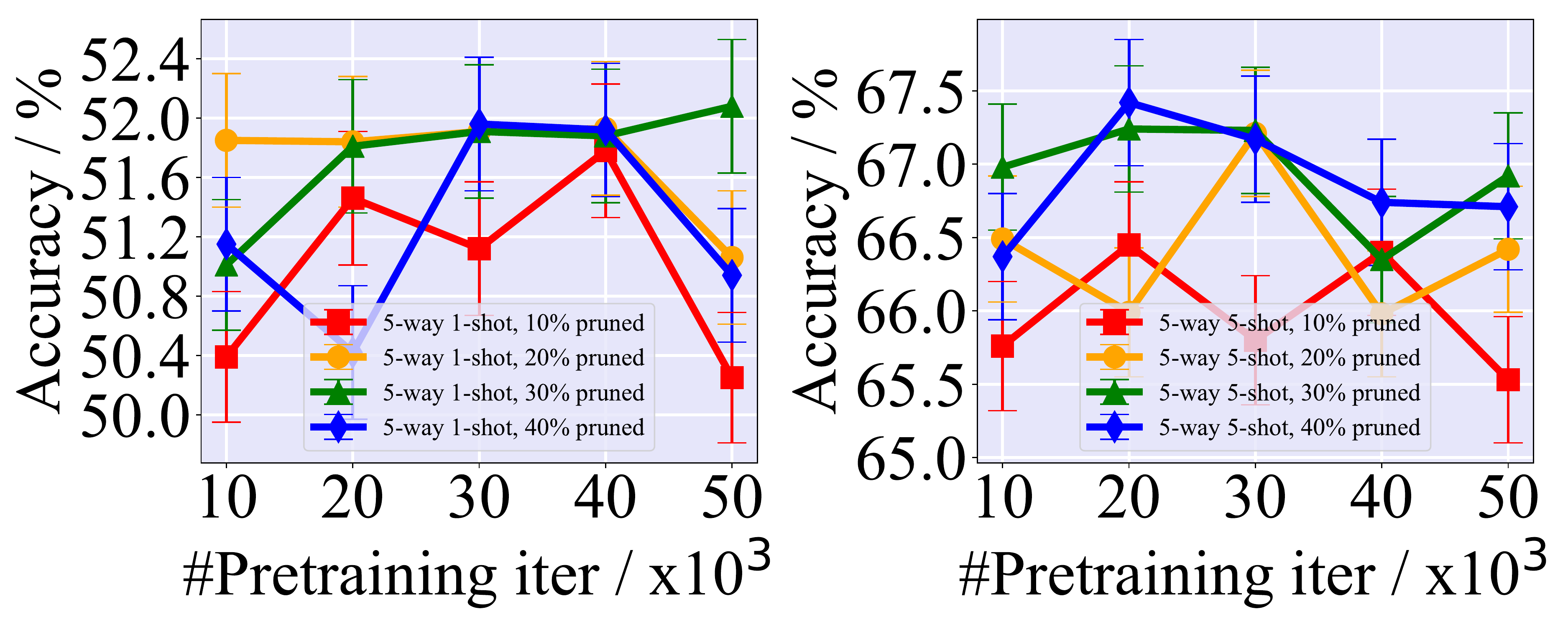}
        \end{minipage}}
    \subfigure[Study on retraining iterations]{
        \begin{minipage}[t]{0.48\linewidth}\label{SubFig:DSD_hyper_retrain}
        \includegraphics[width=0.32\textheight]{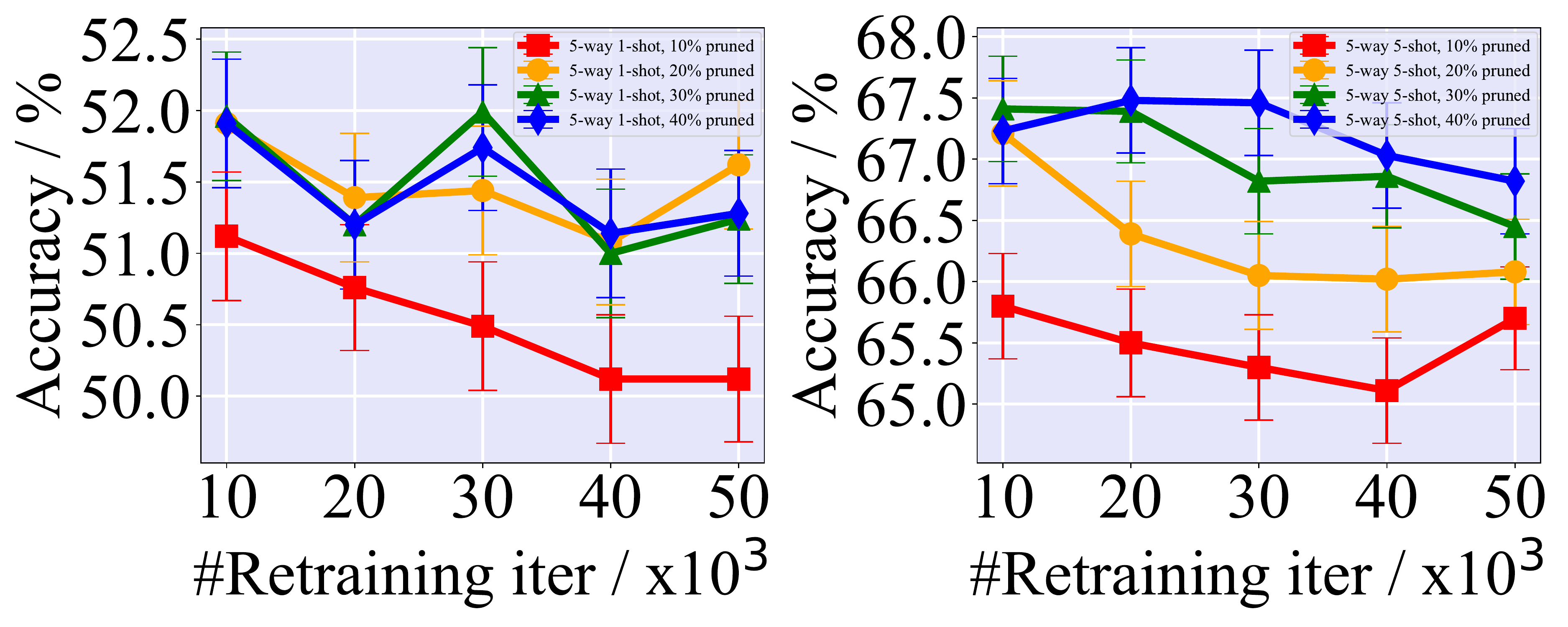}
        \end{minipage}}
\vspace{-1em}
\caption{Study of hyperparameters of DSD-based Reptile. (a). Study on the number of pre-straining iterations. (b). Study on the number of retraining iterations.}
\label{Fig:DSD_hyper}
\end{center}
\end{figure}

\begin{figure}[h]
\begin{center}
\centering
	\subfigure[Study on ratio of pruning iterations]{
        \begin{minipage}[t]{0.486\linewidth}\label{SubFig:IHT_hyper_ratio}
        \includegraphics[width=0.32\textheight]{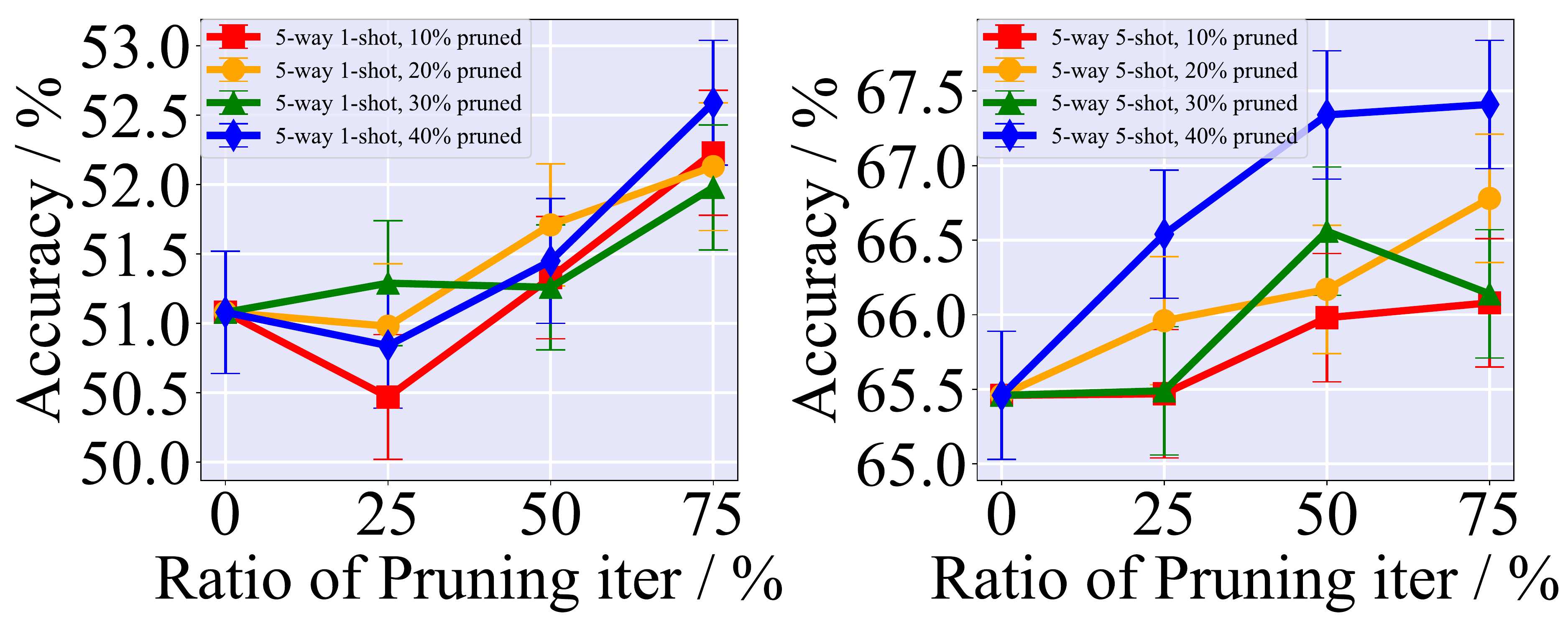}
        \end{minipage}}
    \subfigure[Study on the interval iteration number]{
        \begin{minipage}[t]{0.48\linewidth}\label{SubFig:IHT_hyper_interval}
        \includegraphics[width=0.32\textheight]{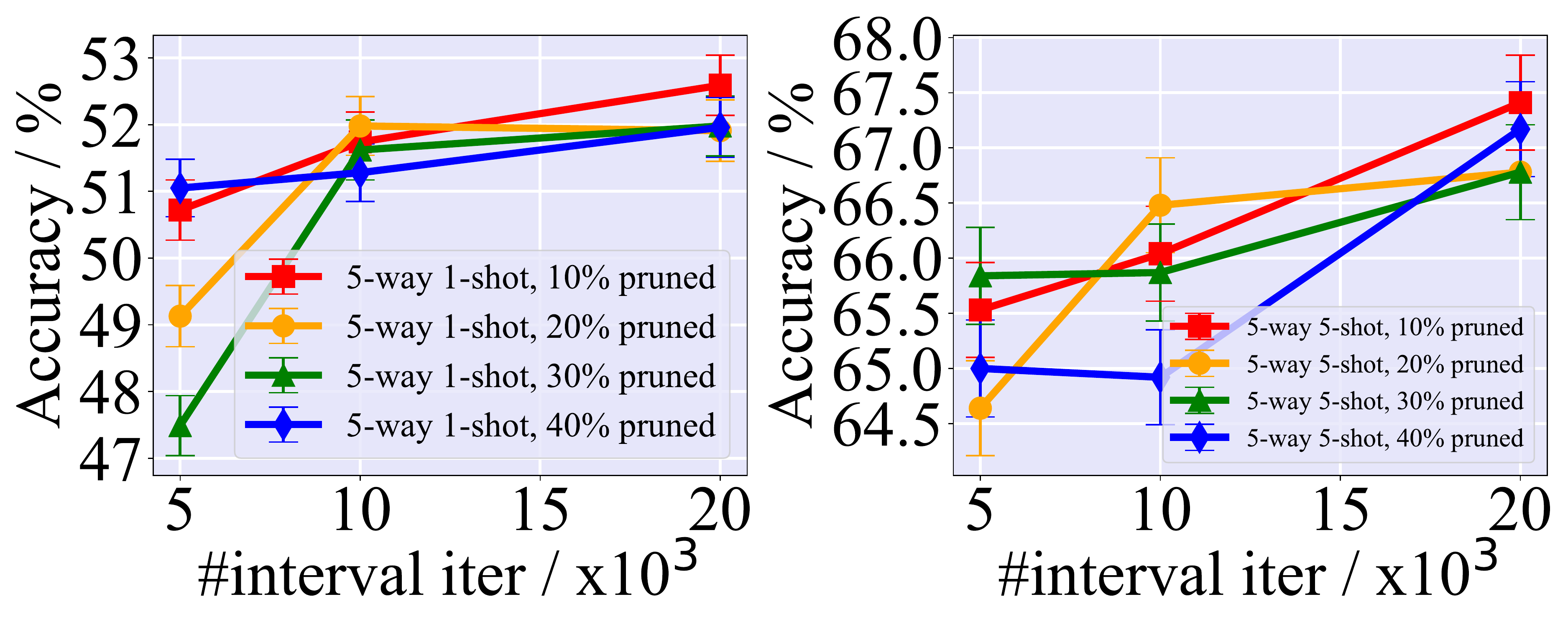}
        \end{minipage}}
\vspace{-1em}
\caption{Study on hyperparameters of IHT-based Reptile. (a). Study on ratio of pruning iterations. (b). Study on the number of interval iterations.}
\label{Fig:IHT_hyper}
\end{center}
\end{figure}

\emph{\bf{DSD-based Reptile.}} Figure~\ref{Fig:DSD_hyper} manifests the performance of DSD-based Reptile varying with the two hyperparameters, number of pre-training and retraining iterations. Figure~\ref{SubFig:DSD_hyper_pretrain} reveals that for most cases, too much or too little pre-training will both lead to the deterioration of performance. This is consistent with our ablation study that pre-training helps find a set of robust sparse parameters that is important and excessive pre-training, which reduces the iterations of pruning phase, undermines the generalization performance. Figure~\ref{SubFig:DSD_hyper_retrain} shows that better performance can be obtained when retraining iterations are smaller than 30K, which indicates that only a small number of retraining steps are required to restore the accuracy without overfitting again.

\emph{\bf IHT-based Reptile.} Figure~\ref{Fig:IHT_hyper} shows the hyper-parameter sensitivity results of IHT-based Reptile. Figure~\ref{SubFig:IHT_hyper_ratio} shows the performance under different ratio of pruning iterations in an IHT interval. It's clear that better performance can be obtained when the ratio is larger than $50\%$, which means pruning iterations are more than retraining iterations. This reveals that more pruning iterations are required to alleviate overfitting and a small number of retraining steps are enough to help compensate the loss of accuracy. Figure~\ref{SubFig:IHT_hyper_interval} shows the performance under varying  number of iterations in an IHT interval. We can see that with the interval iterations increasing from 5K to 20K, the accuracies get improved. This suggests that sufficient steps are required to train a robust model in a loop of pruning-retraining.

\begin{table}[h]
\caption{Results on complex Networks. The ``$\pm$'' shows $95\%$ confidence intervals.}
\label{Table_complex_nets}
\begin{center}
\vspace{-1em}
\setlength{\tabcolsep}{1.5mm}{
\begin{tabular}{|l|c|c|cc|}
\hline
 \bf Methods                              &\bf Backbone       &\bf Rate     &\bf 5-way 1-shot           &\bf 5-way 5-shot
\\ \hline
MetaOptNet(rerun)~\cite{lee2019meta}      &ResNet-12          &-            &61.95$\pm$0.60$\%$         &77.79$\pm$0.45$\%$ \\
DSD-MetaOptNet                            &ResNet-12          &20$\%$       &\bf62.16$\pm$0.62$\%$      &77.51$\pm$0.48$\%$ \\ \hline
CAVIA(rerun)~\cite{zintgraf2019fast}      &128-128-128-128    &-            &48.76$\pm$0.99$\%$         &62.54$\pm$0.78$\%$ \\
DSD-CAVIA                                 &128-128-128-128    &10$\%$       &\bf49.53$\pm$0.93$\%$      &\bf63.34$\pm$0.79$\%$ \\
IHT-CAVIA                                 &128-128-128-128    &10$\%$       &\bf49.97$\pm$0.97$\%$      &\bf63.48$\pm$0.78$\%$ \\
\hline
\end{tabular}}
\end{center}
\end{table}
\subsection{Performance on More Complex Networks}\label{ssect:complex}
We further implement our method on \emph{MetaOptNet}~\cite{lee2019meta} and \emph{CAVIA}~\cite{zintgraf2019fast} to evaluate its the performance on more complex network structures. For MetaOpeNet, we select the ResNet-12 as the network, and SVM as the head and the dropout in the network is replaced by our method. We respectively set the number of iterations of pre-training, pruning and retraining as 5 epochs, 20 epochs and 15 epochs. The learning rate is 0.1 in the first 30 epochs, 0.006 in next 5 epochs and 0.0012 in the final 5 epochs. The dataset used is MiniImageNet. Since our experiments are conducted on 4 RTX 2080Ti GPUs(11GB) while MetaOptNet is trained on 4 Titan X GPUs(12GB), we have to reduce the training shots from 15 to 10 in our experiments. For fair comparison, we rerun the baseline on the same model as in that paper with 10 training shots.

For CAVIA, we apply our method directly on the network parameters, and the context parameters will not be pruned. In DSD-based CAVIA case, the numbers of the iterations for pre-training, pruning and retraining phase are respectively 20K, 20K and 20K. In IHT-based CAVIA case, the iteration number of pre-training is 20K, and the iterative phase include 2 sparse-dense processes. Each sparse-dense process contains 20K iterations in which 16K iterations are for pruning fine-tuning and 4K iterations are for dense retraining. Other settings are the same as those in paper~\cite{zintgraf2019fast}.

As shown in Table~\ref{Table_complex_nets}, for MetaOptNet, our method gains $0.2\%$ improvement on 5-way 1-shot tasks compared with baseline. On 5-way 5-shot tasks, our method can still obtain similar performance. However, there is a trade-off between accuracy and training time. For CAVIA, all cases outperform the baselines, which shows the strong power of our methods in alleviating the overfitting. Overall, our method can to some extent improve the generalization performance even in the complex models when facing with scarce data.

\section{Conclusion}

In this paper, we proposed a cardinality-constrained meta-learning approach for improving generalization performance via explicitly controlling the capacity of over-parameterized neural networks. We have theoretically proved that the generalization gap bounds of the sparse meta-learner have polynomial dependence on the sparsity level rather than the number of parameters. Our approach has been implemented in a scalable meta-learning framework of Reptile with the sparsity level of parameters maintained by network pruning routines including dense-sparse-dense and iterative hard thresholding. Extensive experimental results on benchmark few-shot classification tasks, along with hyperparameter impact study and study on complex networks, confirm our theoretical predictions and demonstrate the power of network pruning and retraining for improving the generalization performance of gradient-optimization-based meta-learning.

\section*{Acknowledgements}

Xiao-Tong Yuan is supported in part by National Major Project of China for New Generation of AI under Grant No.2018AAA0100400 and in part by Natural Science Foundation of China (NSFC) under Grant No.61876090 and No.61936005. Qingshan Liu is supported by NSFC under Grant No.61532009 and No.61825601.


\bibliographystyle{splncs04}
\bibliography{mybib}

\clearpage
\appendix
\section{Proofs of Results}
\subsection{Proof of Theorem~\ref{thrm:universe_generalizaion}}\label{append:proof_thrm_1}
We need the following lemma which guarantees the uniform convergence of $\mathcal{R}_S(\theta)$ towards $\mathcal{R}(\theta)$ for all $\theta$ when the loss function is Lipschitz continuous and smooth, and the optimization is limited on a bounded domain.
\begin{lemma}\label{lemma:universe_support}
Assume that the domain of interest $\Theta \subseteq \mathbb{R}^p$ is bounded by $R$ and the loss function $\ell(f_\theta(\vx),y)$ is $G$-Lipschitz continuous and $H$-smooth with respect to $\theta$. Also assume that $0\le\ell(f_\theta(\vx),y) \le B$ for all $\{f_\theta(\vx), y\}$. Then for any $\delta\in(0,1)$, the following bound holds with probability at least $1-\delta$ over the random draw of sample set $S$ for all $\theta \in \Theta$,
\[
\left|\mathcal{R}(\theta) - \mathcal{R}_S(\theta)\right| \le  \mathcal{O}\left(B\sqrt{\frac{\log(1/\delta) + p\log(\sqrt{M}G R(1+\eta H)/B)}{M}}\right).
\]
\end{lemma}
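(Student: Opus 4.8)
The plan is to view $\mathcal{R}_S(\theta)-\mathcal{R}(\theta)$ as the deviation of an empirical average of i.i.d.\ bounded random functions from its mean, and to control it uniformly over the radius-$R$ ball $\Theta$ by a covering-number argument combined with Hoeffding's inequality. Write $g_T(\theta) := \gL_{\gD^{query}_T}\!\left(\theta - \eta \nabla_\theta \gL_{\gD^{supp}_T}(\theta)\right)$ for the per-task meta-loss, so that $\mathcal{R}(\theta) = \E_{T\sim p(\gT)}[g_T(\theta)]$ and $\mathcal{R}_S(\theta) = \frac{1}{M}\sum_{i=1}^M g_{T_i}(\theta)$. The assumption $0\le\ell\le B$ immediately gives $0 \le g_T(\theta) \le B$ for every task $T$ and every $\theta$, so that for each fixed $\theta$ the variables $\{g_{T_i}(\theta)\}_{i=1}^M$ are i.i.d.\ and $[0,B]$-bounded, which is exactly the input Hoeffding's inequality needs.

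The key structural step is a uniform (in $T$) Lipschitz bound on $g_T$. Let $\phi_T(\theta) := \theta - \eta\nabla_\theta \gL_{\gD^{supp}_T}(\theta)$ denote the one-step adaptation map. Since each $\ell$ is $H$-smooth, the averaged support loss $\gL_{\gD^{supp}_T}$ is $H$-smooth, whence $\|\phi_T(\theta)-\phi_T(\theta')\| \le \|\theta-\theta'\| + \eta\|\nabla\gL_{\gD^{supp}_T}(\theta) - \nabla\gL_{\gD^{supp}_T}(\theta')\| \le (1+\eta H)\|\theta-\theta'\|$, i.e.\ $\phi_T$ is $(1+\eta H)$-Lipschitz. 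Composing with the $G$-Lipschitz query loss yields $|g_T(\theta)-g_T(\theta')| \le G(1+\eta H)\|\theta-\theta'\|$ uniformly in $T$. Consequently both $\mathcal{R}$ and $\mathcal{R}_S$, and therefore $\Delta(\theta):=\mathcal{R}(\theta)-\mathcal{R}_S(\theta)$, are Lipschitz with constant at most $2G(1+\eta H)$.

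With these two ingredients I would run the net argument. Let $\mathcal{N}_\epsilon$ be a minimal $\epsilon$-net of $\Theta$, so $|\mathcal{N}_\epsilon|\le (3R/\epsilon)^p$. For each fixed $\theta_0\in\mathcal{N}_\epsilon$, Hoeffding gives $\Pr[|\Delta(\theta_0)|>t]\le 2\exp(-2Mt^2/B^2)$, and a union bound over the net ensures that with probability at least $1-\delta$ one has $|\Delta(\theta_0)| \le B\sqrt{(\log(2/\delta)+p\log(3R/\epsilon))/(2M)}$ simultaneously for all net points. For arbitrary $\theta\in\Theta$, taking the nearest net point $\theta_0$ and using the Lipschitz bound gives $|\Delta(\theta)| \le |\Delta(\theta_0)| + 2G(1+\eta H)\epsilon$. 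I would finally set the resolution $\epsilon \asymp B/(\sqrt{M}\,G(1+\eta H))$, which makes the discretization error $2G(1+\eta H)\epsilon$ of order $B/\sqrt{M}$ (absorbed into the leading term) and turns $\log(3R/\epsilon)$ into $\log(\sqrt{M}GR(1+\eta H)/B)$, recovering the claimed $\mathcal{O}\!\left(B\sqrt{(\log(1/\delta)+p\log(\sqrt{M}GR(1+\eta H)/B))/M}\right)$ rate.

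The main obstacle is the Lipschitz-constant computation for the composite meta-loss: it is precisely here that both hypotheses enter, the smoothness of the inner loss controlling the Jacobian $I-\eta\nabla^2\gL_{\gD^{supp}_T}$ of the adaptation map, and the Lipschitzness of the outer loss controlling the query evaluation. Everything else is a routine net/union-bound/Hoeffding computation, the only genuine choice being the balancing of $\epsilon$ that produces the logarithmic factor inside the square root. (The sparse refinement in Theorem~\ref{thrm:universe_generalizaion} would then replace the single ball by a union over the $\binom{p}{k}$-many supports, trading the ambient dimension $p$ for the sparsity level $k$.)
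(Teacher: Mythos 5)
Your proposal is correct and follows essentially the same route as the paper's proof: establish that the per-task meta-loss $\theta\mapsto \gL_{\gD^{query}_T}(\theta-\eta\nabla_\theta\gL_{\gD^{supp}_T}(\theta))$ is $G(1+\eta H)$-Lipschitz by composing the $(1+\eta H)$-Lipschitz adaptation map with the $G$-Lipschitz query loss, then combine an $\epsilon$-net of the radius-$R$ ball with Hoeffding's inequality and a union bound, finally choosing $\epsilon\asymp B/\sqrt{M}$ (up to the Lipschitz factor) to obtain the stated logarithmic term. The only cosmetic difference is that the paper places the net on the induced function class in $L_\infty$ distance rather than directly on parameter space, which amounts to the same argument after rescaling $\epsilon$.
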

\begin{proof}
For any task $T$, let us denote $\tilde\ell(\theta; T):=\gL_{\gD^{query}_{T}}\left(\theta - \eta \nabla_{\theta}\gL_{\gD^{supp}_{T}}(\theta)\right)$. Since $\ell(f_\theta(\vx),y)$ is $G$-Lipschitz continuous with respect to $\theta$, we can show that
\[
\begin{aligned}
|\tilde\ell(\theta;T) - \tilde\ell(\theta';T)| \le& G\|\theta - \eta \nabla_{\theta}\gL_{\gD^{supp}_{T}}(\theta) - \theta' + \eta \nabla_{\theta}\gL_{\gD^{supp}_{T}}(\theta')\| \\
\le& G\left(\|\theta - \theta'\| + \eta \|\nabla_{\theta}\gL_{\gD^{supp}_{T}}(\theta) - \nabla_{\theta}\gL_{\gD^{supp}_{T}}(\theta')\|\right) \\
\le& G(1+\eta H)\|\theta - \theta'\|,
\end{aligned}
\]
which indicates that $\tilde\ell(\theta;T)$ is $G(1+\eta H)$-Lipschitz continuous for any task $T$.

As a subset of an $L_2$-sphere, it is standard that the covering number of $\Theta$ with respect to the $L_2$-distance is upper bounded by
\[
\mathcal{N}(\epsilon, \Theta, L_2) \le \mathcal{O}\left(\left(1+\frac{R}{\epsilon}\right)^p\right).
\]
Since the task-level loss function $\tilde\ell(\theta;T)$ is $G(1+\eta H)$-Lipschitz continuous as shown above, it can be verified that the covering number of the class of functions $\mathcal{\tilde L}=\left\{T \mapsto \tilde\ell(\theta;T)\mid \theta \in \Theta\right\}$ with respect to $L_\infty$-distance $L_\infty(\tilde\ell(\theta_1;\cdot),\tilde\ell(\theta_2;\cdot)):=\sup_{T} |\tilde\ell(\theta_1;T) - \tilde\ell(\theta_2;T)|$ is given by
\[
\mathcal{N}(\epsilon, \mathcal{\tilde L}, L_\infty) \le \mathcal{N}\left(\frac{\epsilon}{G(1+\eta H)}, \Theta, L_2\right) \le \mathcal{O}\left(\left(1+\frac{GR(1+\eta H)}{\epsilon}\right)^p\right).
\]
Therefore, there exists a set of points $\Omega \subseteq \mathbb{R}^p$ with cardinality at most $\mathcal{N}(\epsilon, \mathcal{\tilde L}, L_\infty)$ such that the following bound holds for any $\theta\in \Theta$:
\[
\min_{\omega\in \Omega}|\tilde\ell(\theta;T)-\tilde\ell(\omega;T)| \le \epsilon, \ \forall T.
\]
For an arbitrary $\omega\in \Omega$, based on Hoeffding��s inequality (note that $\ell(\cdot,\cdot)\le B$ implies $\tilde \ell(\cdot,\cdot)\le B$) we have
\[
\mathbb{P}\left(|\mathcal{R}_S(\omega) - \mathcal{R}(\omega)|>t\right) \le \exp\left\{-\frac{Mt^2}{2B^2}\right\}.
\]
For any $\theta\in \Theta$, based on triangle inequality we can show that there exits $\omega_\theta\in \Omega$ such that
\[
\begin{aligned}
|\mathcal{R}_S(\theta) - \mathcal{R}(\theta)| =& |\mathcal{R}_S(\theta) - \mathcal{R}_S(\omega_\theta) + \mathcal{R}_S(\omega_\theta) - \mathcal{R}(\omega_\theta) + \mathcal{R}(\omega_\theta) - \mathcal{R}(\theta)| \\
\le& 2\epsilon + |\mathcal{R}_S(\omega_\theta) -\mathcal{R}(\omega_\theta)| \le 2\epsilon + \max_{\omega\in \Omega}|\mathcal{R}_S(\omega) - \mathcal{R}(\omega)|.
\end{aligned}
\]
Applying uniform bound we know that
\[
\begin{aligned}
&\mathbb{P}\left(\sup_{\theta\in \Theta}|\mathcal{R}(\theta)- \mathcal{R}_S(\theta)|\ge 2\epsilon + t \right) \\
\le& \mathcal{N}(\epsilon, \mathcal{L}, \ell_\infty)\exp\left(-\frac{M t^2}{2B^2}\right)\le \mathcal{O}\left(\left(1 + \frac{GR(1+\eta H)}{\epsilon}\right)^p\exp\left(-\frac{M t^2}{2B^2}\right)\right).
\end{aligned}
\]
Let us choose $\epsilon=B/\sqrt{M}$ and
\[
t = \sqrt{2}B\sqrt{\frac{\log(1/\delta) + p\log(GR(1+\eta H)/\epsilon)}{M}}
\]
such that the right hand side of the previous inequality equals $\delta$. Then we obtain that with probability at least $1-\delta$
\[
\begin{aligned}
\sup_{\theta \in \Theta}|\mathcal{R}(\theta) - \mathcal{R}_S(\theta)| \le \mathcal{O}\left(B\sqrt{\frac{\log(1/\delta) + p\log(\sqrt{M}GR(1+\eta H)/B)}{M}}\right).
\end{aligned}
\]
This proves the desired result.
\end{proof}
Based on this lemma, we can readily prove the main result in the theorem.
\begin{proof}[Proof of Theorem~\ref{thrm:universe_generalizaion}]
For any fixed supporting set $J \in \mathcal{J}$, by applying Lemma~\ref{lemma:universe_support} we obtain that the following uniform convergence bound holds for all $\theta$ with $\supp(\theta) \subseteq J$ with probability at least $1-\delta$ over $S$:
\[
\left|\mathcal{R}(\theta) - \mathcal{R}_S(\theta)\right| \le \mathcal{O}\left(B\sqrt{\frac{\log(1/\delta) + k\log(\sqrt{M}GR(1+\eta H)/B)}{M}}\right).
\]
Since by constraint the parameter vector $\theta$ is always $k$-sparse, we thus have $\supp(\theta) \in \mathcal{J}$. Then by union probability we get that with probability at least $1-\delta$, the following bound holds for all $\theta$ with $\|\theta\|_0\le k$:
\[
\left|\mathcal{R}(\theta)  - \mathcal{R}_S(\theta)\right| \le  \mathcal{O}\left(B\sqrt{\frac{\log(|\mathcal{J}|) + \log(1/\delta) + k\log(\sqrt{M}GR(1+\eta H)/B)}{M}}\right).
\]
It remains to bound the cardinality $|\mathcal{J}|$. From~\cite[Lemma 2.7]{rigollet201518} we know $|\mathcal{J}|=\binom{p}{k}\le \left(\frac{ep}{k}\right)^k$, which then implies the desired generalization gap bound. This completes the proof.
\end{proof}

\subsection{Proof of Corollary~\ref{corol:universe_generalizaion_multiclass}}

\begin{proof}
Let $\mathcal{R}_\gamma$ be a population version of $\mathcal{R}_{\gamma,S}$ with margin-based loss function $\ell_\gamma$ used for computing both $\gL_{\gD^{supp}_{T}}$ and $\gL_{\gD^{query}_{T}}$. Since $\ell_\gamma$ is a surrogate of the binary loss as used by $\mathcal{\tilde R}$ for query classification error evaluation, we must have $\mathcal{\tilde R}\le \mathcal{R}_\gamma$. Then the desired bound follows directly by invoking Theorem~\ref{thrm:universe_generalizaion} to the considered margin loss.
\end{proof}

\section{Detailed Experimental Settings}\label{Appdx:Detailed_Settings}
\subsection{Model}\label{appdx:model}

The model used in our experiments is consistent with that considered for Reptile\cite{nichol2018first}. The model used throughout the experiment contains 4 sequential modules. Each module contains a convolutional layer with 3$\times$3 kernel, followed by a batch normalization and a ReLU activation. Additionally for the experiments on MiniImageNet, a $2\times2$ max-pooling pooling is used on the batch normalization layer output while for Omniglot a stride of 2 is used in convolution. The above network structure design is consistent with those considered for Reptile in~\cite{nichol2018first}. We test with varying channel number $\{32, 64, 128, 256\}$ in each convolution layer to show the robustness of our algorithms to meta-overfitting.

\subsection{Datasets}\label{subsec:dataset}
There are three popular benchmark datasets used in our experiments.

\begin{figure}
\begin{center}
\centering
\vspace{-2em}
    \subfigure[5-way 1-shot tasks generated from Omniglot]{
        \begin{minipage}[t]{0.48\linewidth}\label{omniglot_tasks}
        \includegraphics[width=0.31\textheight]{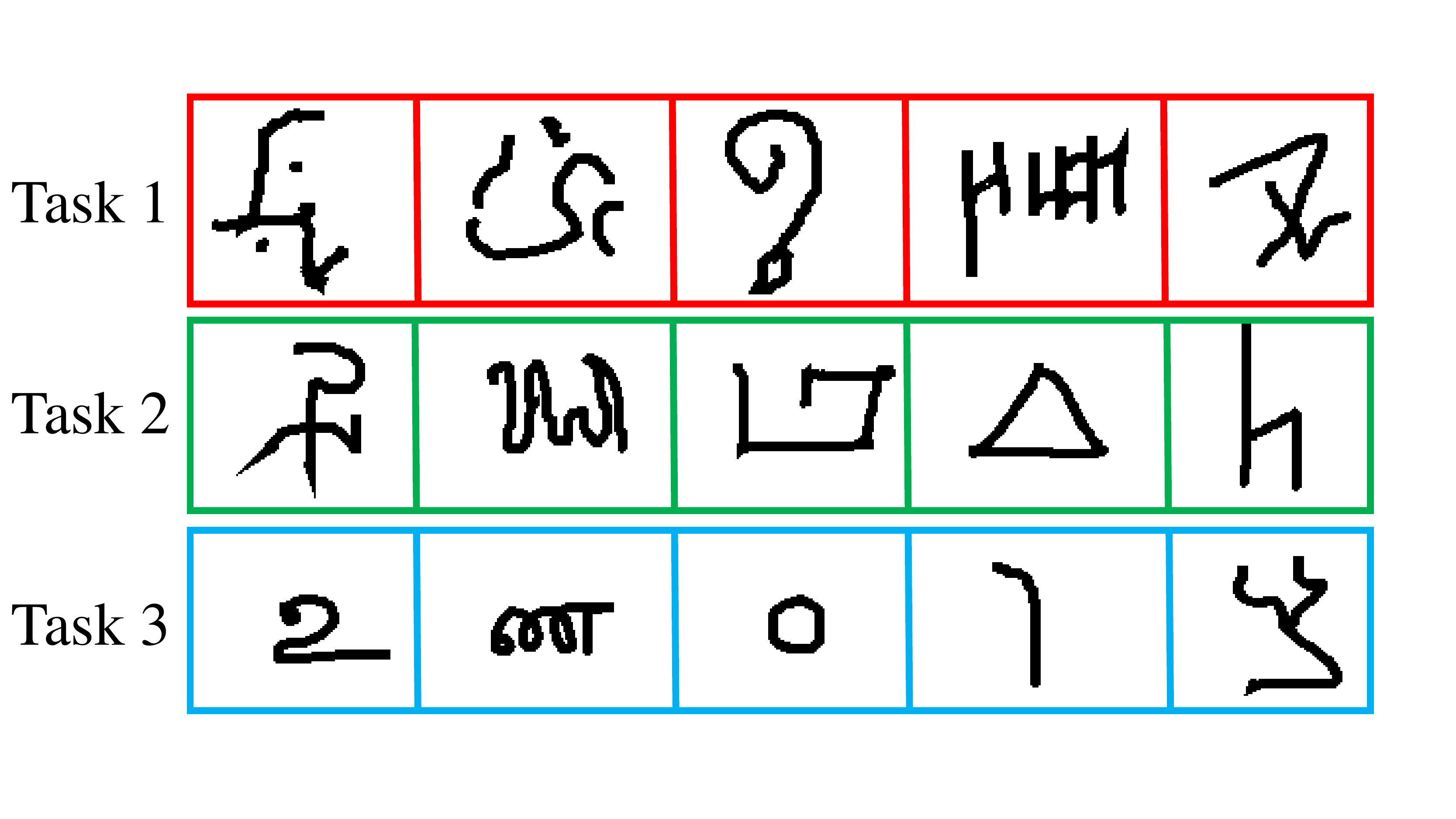}
        \end{minipage}}
    \subfigure[5-way 1-shot tasks generated from MiniImageNet or TieredImageNet dataset]{
        \begin{minipage}[t]{0.48\linewidth}\label{imagenet_tasks}
        \includegraphics[width=0.31\textheight]{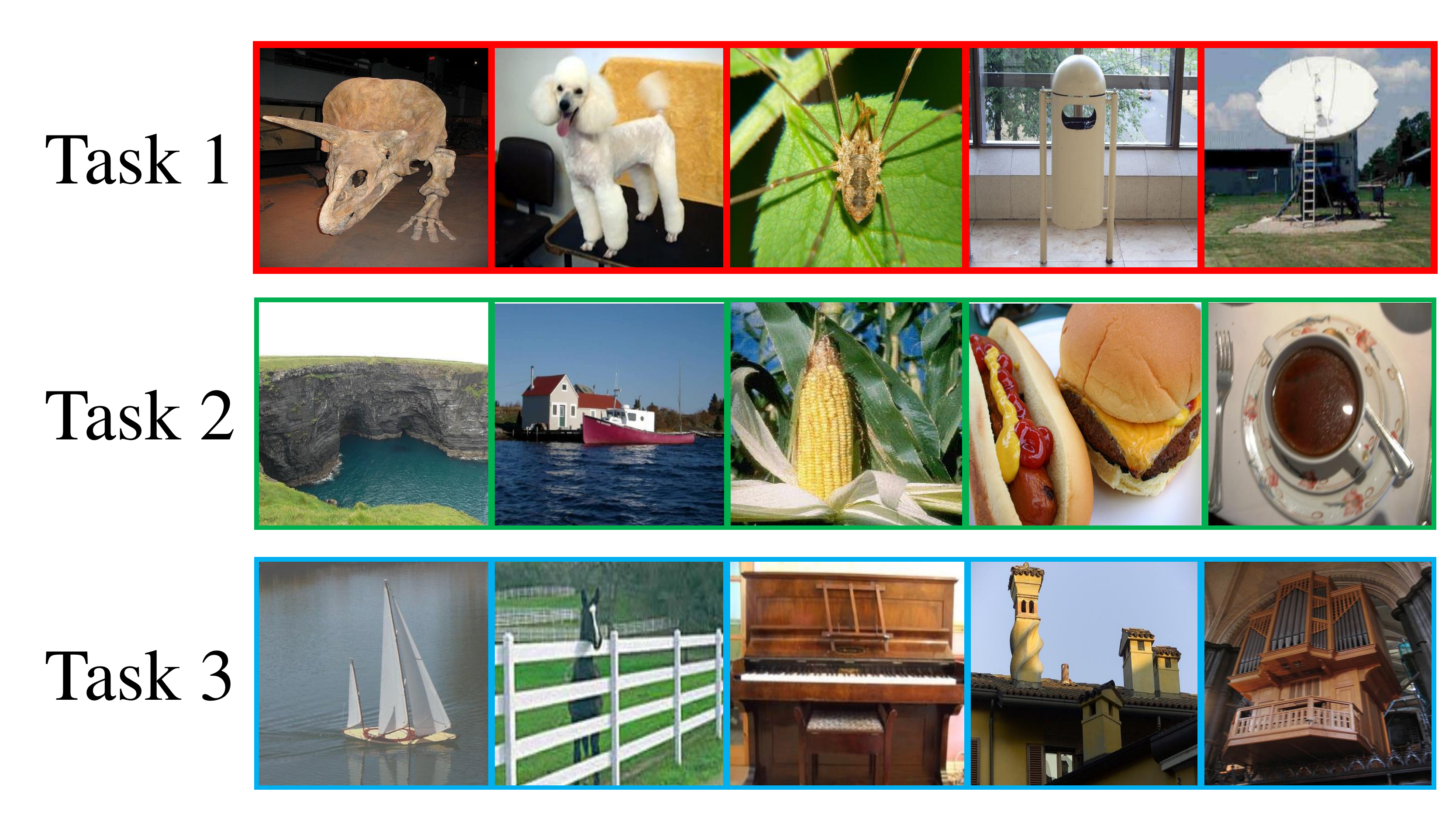}
        \end{minipage}}
\caption{Tasks used in our experiments. (a). Tasks generated from Omniglot. (b). Tasks generated from MiniImageNet or TieredImageNet dataset.}
\label{Fig:DSD_hyper}
\vspace{-2em}
\end{center}
\end{figure}
\begin{table}[h]
\vspace{-0.5em}
\caption{Detailed experimental settings for Omniglot, MiniImageNet, TieredImageNet datasets with DSD-based Reptile.}
\vspace{-1em}
\label{DSD_Detailed_experimental_settings}
\begin{center}
\begin{tabular}{|l|c|c|c|}
\hline
\bf Hyperparameters         &\bf Omniglot       &\bf MiniImageNet       &\bf TieredImageNet\\
\hline
classes                     &$5$                &$5$                    &$5$\\
shot                        &$1$ or $5$         &$1$ or $5$             &$1$ or $5$\\
inner batch                 &$10$               &$10$                   &$6$\\
inner iterations            &$5$                &$8$                    &$8$\\
outer learning rate         &$1$                &$1$                    &$1$\\
meta batch                  &$5$                &$5$                    &$5$\\
meta iterations             &$10^4$             &$10^4$                 &$10^4$\\
evaluation batch            &$5$                &$5$                    &$5$\\
evaluation iterations       &$50$               &$50$                   &$50$\\
inner learning rate         &$0.001$            &$0.001$                &$0.001$\\
pre-train iterations        &$3\times10^4$      &$3\times10^4$          &$3\times10^4$\\
pruning iterations(32c)          &$5\times10^4$      &$5\times10^4$     &$5\times10^4$\\
retrain iterations(32c)          &$2\times10^4$      &$2\times10^4$            &$2\times10^4$\\
pruning iterations(64/128/256c)  &$5\times10^4$      &$6\times10^4$     &$5\times10^4$\\
retrain iterations(64/128/256c)  &$2\times10^4$      &$10^4$            &$2\times10^4$\\
\hline
\end{tabular}
\end{center}
\vspace{-2em}
\end{table}

\begin{table}[h]
\caption{Detailed experimental settings for Omniglot, MiniImageNet, TieredImageNet datasets with IHT-based Reptile.}
\vspace{-1em}
\label{IHT_Detailed_experimental_settings}
\begin{center}
\begin{tabular}{|l|c|c|c|}
\hline
\bf Hyperparameters         &\bf Omniglot       &\bf MiniImageNet       &\bf TieredImageNet\\
\hline
classes                     &$5$                &$5$                    &$5$\\
shot                        &$1$ or $5$         &$1$ or $5$             &$1$ or $5$\\
inner batch                 &$10$               &$10$                   &$6$\\
inner iterations            &$5$                &$8$                    &$8$\\
outer learning rate         &$1$                &$1$                    &$1$\\
meta batch                  &$5$                &$5$                    &$5$\\
meta iterations             &$10^4$             &$10^4$                 &$10^4$\\
evaluation batch            &$5$                &$5$                    &$5$\\
evaluation iterations       &$50$               &$50$                   &$50$\\
inner learning rate         &$0.001$            &$0.001$                &$0.001$\\
epoch numbers               &$5$                &$5$                    &$5$\\
iteration numbers per interval          &$2\times10^4$      &$2\times10^4$     &$2\times10^4$\\
pruning iterations          &$1.5\times10^4$     &$1.5\times10^4$         &$1.5\times10^4$\\
retrain iterations          &$5\times10^3$      &$5\times10^3$          &$5\times10^3$\\
\hline
\end{tabular}
\end{center}
\vspace{-2em}
\end{table}

\textbf{Omniglot}\quad The Omniglot dataset has 1623 characters from 50 alphabets. Each character contains 20 instances drawn by different individuals. The size of each image is 28$\times$28. We randomly select 1200 characters for meta training and the rest are used for meta testing. Following~\cite{santoro2016meta}, we also adopt a data augmentation strategy based on image rotation to enhance performance.

\textbf{MiniImageNet}\quad The MiniImageNet dataset consists of 100 classes from the ImageNet dataset~\cite{krizhevsky2012imagenet} and each class contains $600$ images of size $84\times84\times$3. There are $64$ classes used for training, $12$ classes for validation and $24$ classes for testing.

\textbf{TieredImageNet}\quad The TieredImageNet dataset consists of 608 classes from the ILSVRC-12 dataset \cite{russakovsky2015imagenet} and each image is scaled to $84\times84\times3$. There are 351 classes used for training, 97 classes for validation and 160 classes used for testing.

\subsection{Detailed Experimental Settings}\label{subsec:Hyper_settings}
The experimental details of DSD-based Reptile and IHT-based Reptile can respectively be seen in Table~\ref{DSD_Detailed_experimental_settings} and Table~\ref{IHT_Detailed_experimental_settings}. There are two points of hyperparameter settings that should be highlighted.

\begin{itemize}
\item The outer learning rate has an initial value $1.0$ which will decay with iteration added.
\item For MiniImageNet~\cite{vinyals2016matching} with DSD-based Reptile, the iteration number of pruning phase for $32$-channel case is $5\times10^4$ and for $64/128/256$-channel case is $6\times10^4$. Correspondingly, the iteration number of retraining phase for $32$-channel case is $2\times10^4$ and for $64/128/256$-channel case is $10^4$.
\end{itemize}


\section{Additional Experimental Results}\label{apdx_c}

This appendix contains complete experimental results for Omniglot, MiniImageNet and TieredImageNet datasets. We performed our methods on 4-layer CNNs with varying channel number $\{32, 64, 128, 256\}$ as mentioned in Section \ref{Appdx:Detailed_Settings}.

\subsection{Results on Omniglot dataset}\label{Appdx:Omniglot_complete_results}
\begin{table}[h]
\caption{Few Shot Classification results on Omniglot dataset for 4-layer convolutional network with different channels on 5-way 1-shot and 5-way 5-shot tasks. The ``$\pm$'' shows $95\%$ confidence intervals over tasks. The evaluation  baselines are run by us.}
\label{Omniglot5wayresults_table}
\begin{center}
\begin{tabular}{|l|l|l|cc|}
\hline
\bf Methods     &\bf Backbone      &\bf Rate    &\bf 5-way 1-shot    &\bf 5-way 5-shot\\
\hline
\multirow{4}*{Reptile baseline}     &32-32-32-32            &0$\%$      &96.63$\pm$0.17$\%$     &99.31$\pm$0.07$\%$ \\
                                   ~&64-64-64-64            &0$\%$      &97.68$\pm$0.10$\%$     &99.48$\pm$0.06$\%$ \\
                                   ~&128-128-128-128        &0$\%$      &97.99$\pm$0.11$\%$     &99.60$\pm$0.13$\%$ \\
                                   ~&256-256-256-256        &0$\%$      &98.05$\pm$0.13$\%$     &99.65$\pm$0.06$\%$ \\ \hline

\multirow{16}*{DSD-based Reptile}   &\multirow{4}*{32-32-32-32}        &$10\%$      &96.42$\pm$0.17$\%$      &\bf99.38$\pm$0.07$\%$ \\
                                                                  ~&&$20\%$      &95.98$\pm$0.18$\%$      &99.33$\pm$0.07$\%$ \\
                                                                  ~&&$30\%$      &96.22$\pm$0.17$\%$      &99.23$\pm$0.08$\%$ \\
                                                                  ~&&$40\%$      &96.53$\pm$0.17$\%$      &99.37$\pm$0.07$\%$ \\ \cline{2-5}

                                   ~&\multirow{4}*{64-64-64-64}        &$10\%$      &\bf97.64$\pm$0.02$\%$   &\bf 99.50$\pm$0.05$\%$ \\
                                                                  ~&&$20\%$      &97.60$\pm$0.07$\%$      &99.49$\pm$0.04$\%$ \\
                                                                  ~&&$30\%$      &97.47$\pm$0.05$\%$      &99.49$\pm$0.05$\%$ \\
                                                                  ~&&$40\%$      &97.43$\pm$0.01$\%$      &99.45$\pm$0.03$\%$ \\ \cline{2-5}

                                   ~&\multirow{4}*{128-128-128-128}    &$10\%$      &\bf 98.04$\pm$0.10$\%$  &99.61$\pm$0.10$\%$ \\
                                                                  ~&&$20\%$      &97.99$\pm$0.10$\%$      &99.62$\pm$0.12$\%$ \\
                                                                  ~&&$30\%$      &97.96$\pm$0.12$\%$      &\bf 99.63$\pm$0.12$\%$ \\
                                                                  ~&&$40\%$      &97.99$\pm$0.10$\%$      &99.61$\pm$0.10$\%$ \\ \cline{2-5}

                                   ~&\multirow{4}*{256-256-256-256}        &$10\%$      &\bf98.12$\pm$0.12$\%$   &\bf99.68$\pm$0.05$\%$ \\
                                                                  ~&&$20\%$      &98.02$\pm$0.13$\%$      &99.66$\pm$0.05$\%$ \\
                                                                  ~&&$30\%$      &97.96$\pm$0.13$\%$      &99.67$\pm$0.05$\%$ \\
                                                                  ~&&$40\%$      &97.99$\pm$0.10$\%$      &99.63$\pm$0.06$\%$ \\ \hline
\multirow{16}*{IHT-based Reptile}      &\multirow{4}*{32-32-32-32}        &$10\%$      &\bf96.65$\pm$0.16$\%$   &99.49$\pm$0.06$\%$ \\
                                                                  ~&&$20\%$      &96.54$\pm$0.17$\%$      &\bf99.57$\pm$0.06$\%$ \\
                                                                  ~&&$30\%$      &96.45$\pm$0.17$\%$      &99.52$\pm$0.06$\%$ \\
                                                                  ~&&$40\%$      &96.21$\pm$0.18$\%$      &99.48$\pm$0.07$\%$ \\ \cline{2-5}

                               ~&\multirow{4}*{64-64-64-64}        &$10\%$      &97.63$\pm$0.14$\%$      &99.49$\pm$0.06$\%$ \\
                                                                  ~&&$20\%$      &97.60$\pm$0.13$\%$      &\bf99.57$\pm$0.06$\%$ \\
                                                                  ~&&$30\%$      &\bf97.77$\pm$0.15$\%$   &99.52$\pm$0.06$\%$ \\
                                                                  ~&&$40\%$      &97.51$\pm$0.1$\%$       &99.48$\pm$0.07$\%$ \\ \cline{2-5}

                               ~&\multirow{4}*{128-128-128-128}        &$10\%$      &98.12$\pm$0.12$\%$      &99.63$\pm$0.06$\%$ \\
                                                                  ~&&$20\%$      &\bf98.22$\pm$0.12$\%$   &99.64$\pm$0.05$\%$ \\
                                                                  ~&&$30\%$      &98.01$\pm$0.13$\%$      &\bf 99.65$\pm$0.05$\%$ \\
                                                                  ~&&$40\%$      &98.06$\pm$0.12$\%$      &99.63$\pm$0.06$\%$ \\ \cline{2-5}

                               ~&\multirow{4}*{256-256-256-256}        &$10\%$      &\bf98.16$\pm$0.12$\%$   &99.66$\pm$0.05$\%$ \\
                                                                  ~&&$20\%$      &98.08$\pm$0.13$\%$      &\bf99.69$\pm$0.05$\%$ \\
                                                                  ~&&$30\%$      &98.05$\pm$0.13$\%$      &99.64$\pm$0.05$\%$ \\
                                                                  ~&&$40\%$      &97.90$\pm$0.13$\%$      &99.65$\pm$0.05$\%$ \\
\hline
\end{tabular}
\end{center}
\end{table}
 The baselines and all the results of Omniglot dataset are reported in Table \ref{Omniglot5wayresults_table}. For each case, both DSD-based Reptile approach and IHT-based Reptile approach are evaluated on various pruning rates. The settings are the same as proposed in Section \ref{subsec:Hyper_settings}.

For 32-channel case and 64-channel cases, which is less prone to be overfitting, both DSD-based Reptile approach and IHT-based Reptile approach tend to achieve comparable performance to baselines. 
When the channel size increases to $128$ and $256$, slightly improved performance can be observed. This is consistent with our analysis that overfiting is more likely to happen when channel number is relatively large and weight pruning helps alleviate such phenomenon to improve the generalization performance, which then leads to accuracy improvement with retraining operation.

\subsection{Results on MiniImageNet dataset}\label{Appdx:MiniImageNet_complete_results}

In this section, we report the detailed results of experiments on MiniImageNet dataset.
\begin{table}[h]
\caption{Few Shot Classification results on MiniImageNet dataset for 4-layer convolutional network with different channels on 5 way setting. The ``$\pm$'' shows $95\%$ confidence intervals over tasks. The evaluation  baselines are run by us.}
\label{MiniImageNet5wayresults_table}
\begin{center}
\begin{tabular}{|l|l|l|cc|}
\hline
\bf Methods     &\bf Backbone       &\bf Rate    &\bf 5-way 1-shot    &\bf 5-way 5-shot
\\ \hline
\multirow{4}*{Reptile baseline}     &32-32-32-32        &0$\%$      &50.30$\pm$0.40$\%$     &64.27$\pm$0.44$\%$ \\
                                   ~&64-64-64-64        &0$\%$      &51.08$\pm$0.44$\%$     &65.46$\pm$0.43$\%$ \\
                                   ~&128-128-128-128    &0$\%$      &49.96$\pm$0.45$\%$     &64.40$\pm$0.43$\%$ \\
                                   ~&256-256-256-256    &0$\%$      &48.60$\pm$0.44$\%$     &63.24$\pm$0.43$\%$ \\
\multirow{4}*{CAVIA baseline}       &32-32-32-32        &0$\%$      &47.24$\pm$0.65$\%$     &59.05$\pm$0.54$\%$ \\
                                   ~&128-128-128-128    &0$\%$      &49.84$\pm$0.68$\%$     &64.63$\pm$0.54$\%$ \\
                                   ~&512-512-512-512    &0$\%$      &51.82$\pm$0.65$\%$     &65.85$\pm$0.55$\%$ \\
\hline

\multirow{13}*{DSD-based Reptile}         &\multirow{4}*{32-32-32-32}     &$10\%$     &\bf50.65$\pm$0.45$\%$  &\bf65.29$\pm$0.44$\%$ \\
                                                                        ~&&$20\%$     &49.94$\pm$0.43$\%$     &64.65$\pm$0.43$\%$ \\
                                                                        ~&&$30\%$     &50.18$\pm$0.43$\%$     &\bf65.78$\pm$0.41$\%$ \\
                                                                        ~&&$40\%$     &\bf50.83$\pm$0.45$\%$  &\bf65.24$\pm$0.44$\%$ \\ \cline{2-5}

                                   ~&\multirow{4}*{64-64-64-64}           &$10\%$     &51.12$\pm$0.45$\%$     &65.80$\pm$0.44$\%$ \\
                                                                        ~&&$20\%$     &\bf51.91$\pm$0.45$\%$  &\bf67.21$\pm$0.43$\%$ \\
                                                                        ~&&$30\%$     &\bf51.91$\pm$0.45$\%$  &\bf67.23$\pm$0.43$\%$ \\
                                                                        ~&&$40\%$     &\bf51.96$\pm$0.45$\%$  &\bf67.17$\pm$0.43$\%$ \\ \cline{2-5}

                                   ~&\multirow{4}*{128-128-128-128}       &$30\%$     &51.98$\pm$0.45$\%$     &68.16$\pm$0.43$\%$ \\
                                                                        ~&&$40\%$     &52.15$\pm$0.45$\%$     &68.19$\pm$0.43$\%$ \\
                                                                        ~&&$50\%$     &52.08$\pm$0.45$\%$     &\bf68.87$\pm$0.42$\%$ \\
                                                                        ~&&$60\%$     &\bf52.27$\pm$0.45$\%$  &68.44$\pm$0.42$\%$ \\ \cline{2-5}
                                   ~&256-256-256-256                      &$60\%$     &\bf53.00$\pm$0.45$\%$  &\bf68.04$\pm$0.42$\%$ \\ \hline

\multirow{13}*{IHT-based Reptile}         &\multirow{4}*{32-32-32-32}     &$10\%$     &\bf50.45$\pm$0.45$\%$  &63.91$\pm$0.46$\%$ \\
                                                                        ~&&$20\%$     &50.26$\pm$0.47$\%$     &63.63$\pm$0.45$\%$ \\
                                                                        ~&&$30\%$     &50.21$\pm$0.44$\%$     &\bf65.05$\pm$0.45$\%$ \\
                                                                        ~&&$40\%$     &49.74$\pm$0.46$\%$     &64.15$\pm$0.45$\%$ \\ \cline{2-5}

                                   ~&\multirow{4}*{64-64-64-64}           &$10\%$     &\bf52.23$\pm$0.45$\%$  &\bf66.08$\pm$0.43$\%$ \\
                                                                        ~&&$20\%$     &\bf52.13$\pm$0.46$\%$  &\bf66.78$\pm$0.43$\%$ \\
                                                                        ~&&$30\%$     &51.98$\pm$0.45$\%$     &66.14$\pm$0.43$\%$ \\
                                                                        ~&&$40\%$     &\bf52.59$\pm$0.45$\%$  &\bf67.41$\pm$0.43$\%$ \\ \cline{2-5}

                                   ~&\multirow{4}*{128-128-128-128}       &$30\%$     &51.64$\pm$0.45$\%$     &67.05$\pm$0.43$\%$ \\
                                                                        ~&&$40\%$     &52.73$\pm$0.45$\%$     &\bf68.69$\pm$0.42$\%$ \\
                                                                        ~&&$50\%$     &52.76$\pm$0.45$\%$     &67.63$\pm$0.43$\%$ \\
                                                                        ~&&$60\%$     &\bf52.95$\pm$0.45$\%$  &68.04$\pm$0.42$\%$ \\ \cline{2-5}

                                   ~&256-256-256-256                      &$60\%$     &\bf49.85$\pm$0.44$\%$  &\bf66.56$\pm$0.42$\%$ \\ \hline
\end{tabular}
\end{center}
\end{table}

From the table, it can be obviously observed that our method achieves remarkable performance consistently. For one thing, with the number of channels increasing, the accuracies of our methods keep being improved while the baselines perform oppositely.
For example, in the 32-channel setting in which the model is less prone to overfit, when applying DSD-based Reptile with $10\%$ and $40\%$ pruning rate, the accuracy gain is $0.35\%$ and $0.5\%$ on 5-way 1-shot tasks and $1.02\%$ and $1\%$ on 5-way 5-shot tasks.
In the 64-channel setting, DSD-based Reptile respectively achieves $0.83\%$, $0.83\%$, $0.88\%$ improvements over 5-way 1-shot baseline and $1.75\%$, $1.77\%$, $1.18\%$ improvements over 5-way 5-shot baseline with pruning rates $20\%$, $30\%$, $40\%$. Meanwhile our IHT-based Reptile approach respectively improves about $1.15\%$, $1.05\%$, $1.51\%$ on 5-way 1-shot tasks and $0.62\%$, $1.32\%$ and $1.95\%$ on 5-way 5-shot tasks with pruning rates $10\%$, $20\%$, $40\%$.
In the setting of 128-channel, all the cases of our method outperform the baseline remarkably, and the best accuracy of DSD-based Reptile on 5-way 1-shot tasks is nearly $3\%$ higher than the baseline while on 5-way 5-shot tasks the gain is about $4.47\%$.

Our method also outperforms CAVIA~\cite{zintgraf2019fast}, which can increase the network size without overfitting. With our method, CNNs with 64 channels can obtain better performance than the best result of CAVIA. 

\subsection{Results on TieredImageNet dataset}\label{Appdx:TieredImageNet_complete_results}
\begin{table}[h]
\caption{Few Shot Classification results on TieredImageNet dataset for 4-layer convolutional network with different channels on 5 way setting. The ``$\pm$'' shows $95\%$ confidence intervals over tasks. The evaluation  baselines are run by us.}
\label{TieredImageNet5wayresults_table}
\begin{center}
\begin{tabular}{|l|l|l|cc|}
\hline
\bf Methods     &\bf Backbone       &\bf Rate    &\bf 5-way 1-shot    &\bf 5-way 5-shot
\\ \hline
\multirow{4}*{Reptile baseline}     &32-32-32-32        &0$\%$      &50.52$\pm$0.45$\%$     &64.63$\pm$0.44$\%$ \\
                                   ~&64-64-64-64        &0$\%$      &51.98$\pm$0.45$\%$     &67.70$\pm$0.43$\%$ \\
                                   ~&128-128-128-128    &0$\%$      &53.30$\pm$0.45$\%$     &69.29$\pm$0.42$\%$ \\
                                   ~&256-256-256-256    &0$\%$      &54.62$\pm$0.45$\%$     &68.06$\pm$0.42$\%$ \\ \hline

\multirow{8}*{DSD-based Reptile}          &\multirow{2}*{32-32-32-32}     &$10\%$     &\bf 50.94$\pm$0.46$\%$     &64.65$\pm$0.44$\%$ \\
                                                                  ~&&$20\%$     &49.85$\pm$0.46$\%$         &63.72$\pm$0.44$\%$ \\ \cline{2-5}
                                   ~&\multirow{2}*{64-64-64-64}     &$10\%$      &\bf 52.62$\pm$0.46$\%$     &66.69$\pm$0.43$\%$ \\
                                                                  ~&&$20\%$     &51.95$\pm$0.45$\%$         &66.05$\pm$0.43$\%$ \\ \cline{2-5}
                                   ~&\multirow{2}*{128-128-128-128} &$10\%$     &53.39$\pm$0.46$\%$         &67.22$\pm$0.43$\%$ \\
                                                                  ~&&$20\%$     &52.61$\pm$0.46$\%$         &66.39$\pm$0.43$\%$ \\ \cline{2-5}
                                   ~&\multirow{2}*{256-256-256-256} &$10\%$     &54.55$\pm$0.45$\%$         &\bf68.60$\pm$0.43$\%$ \\
                                                                  ~&&$20\%$     &\bf 54.98$\pm$0.45$\%$     &67.98$\pm$0.43$\%$ \\ \hline

\multirow{8}*{IHT-based Reptile}         ~&\multirow{2}*{32-32-32-32}     &$10\%$     &\bf 50.58$\pm$0.46$\%$     &63.09$\pm$0.45$\%$ \\
                                                                  ~&&$20\%$     &50.19$\pm$0.46$\%$         &63.42$\pm$0.44$\%$ \\ \cline{2-5}
                                   ~&\multirow{2}*{64-64-64-64}     &$10\%$     &51.75$\pm$0.45$\%$         &65.20$\pm$0.44$\%$ \\
                                                                  ~&&$20\%$     &\bf53.22$\pm$0.46$\%$      &66.15$\pm$0.44$\%$ \\ \cline{2-5}
                                   ~&\multirow{2}*{128-128-128-128} &$10\%$     &\bf 53.48$\pm$0.45$\%$     &69.36$\pm$0.42$\%$ \\
                                                                  ~&&$20\%$     &52.98$\pm$0.45$\%$         &66.22$\pm$0.43$\%$ \\ \cline{2-5}
                                   ~&\multirow{2}*{256-256-256-256} &$10\%$     &\bf 55.06$\pm$0.45$\%$     &67.60$\pm$0.43$\%$ \\
                                                                  ~&&$20\%$     &54.38$\pm$0.45$\%$         &\bf69.36$\pm$0.42$\%$ \\
\hline
\end{tabular}
\end{center}
\end{table}

In this section, we present the detailed results of experiments on TieredImageNet dataset in Table~\ref{TieredImageNet5wayresults_table}.

From the table, we can observe that our method achieves good performance on 5-way 1-shot classification tasks. For example, in 32-channel settings, the accuracy of DSD-based Reptile with $10\%$ pruning rate is $\sim 0.5\%$ higher than baseline; in 64-channel settings, both DSD-based Reptile and IHT-based Reptile improve the performance evidently, respectively are $0.64\%$ and $1.24\%$; and in 256-channel settings, the best performance achieves $0.44\%$ improvement over the baseline.

However, in most 5-way 5-shot classification tasks, the performance of our method drops. We conjecture that the reason is that TieredImageNet dataset, compared with MiniImageNet dataset, contains more classes from which the networks can learn more prior knowledge and thus ease the overfitting.

\end{document}